\documentclass{article} % For LaTeX2e
\usepackage[preprint]{colm2026_conference}

\usepackage{microtype}
\usepackage{hyperref}
\usepackage{url}
\usepackage{booktabs}

\usepackage{lineno}

\usepackage{graphicx}
\usepackage{amsmath}
\usepackage{xcolor}
\usepackage{enumitem}
\usepackage{listings}
\usepackage[most]{tcolorbox}
\usepackage{amsthm}
\usepackage{amssymb}
\usepackage{tikz}
\usepackage{subcaption}
\usepackage{lmodern} % Ensures high-quality font rendering
\usepackage{wrapfig}
\usepackage{fancyvrb}
\usepackage{fvextra}
\usepackage[T1]{fontenc}
\usepackage{float}
\usepackage{thmtools}
\usepackage{thm-restate}
\usetikzlibrary{arrows.meta, calc, decorations.markings, positioning}

\newtheorem{definition}{Definition}
\newtheorem{lemma}{Lemma}

\newcommand{\NC}[1]{\mathbf{NC}^{#1}}

\newtcolorbox{promptbox}[1]{
    enhanced,
    sharp corners,
    boxrule=0.5pt,
    colback=white,
    colframe=gray!30,     % Light border
    colbacktitle=gray!10, % Light gray header background
    coltitle=black,       % Black text for title
    fonttitle=\sffamily\bfseries\footnotesize,
    title=#1,
    titlerule=0.5pt,      % Line between title and content
    top=0pt,
    bottom=-8pt,          % Cancels listing margin
    left=4pt,
    right=4pt,
}

\definecolor{darkblue}{rgb}{0, 0, 0.5}
\hypersetup{colorlinks=true, citecolor=darkblue, linkcolor=darkblue, urlcolor=darkblue}

\title{Can Vision-Language Models Solve the Shell Game?}

\author{Tiedong Liu \\
National University of Singapore\\
\texttt{tiedong.liu@u.nus.edu} \\
\And
Wee Sun Lee \\
National University of Singapore\\
\texttt{dcsleews@nus.edu.sg} \\
}

\begin{document}

\ifcolmsubmission
\linenumbers
\fi

\maketitle

\begin{abstract}
Visual entity tracking is an innate cognitive ability in humans, yet it remains a critical bottleneck for Vision-Language Models (VLMs). This deficit is often obscured in existing video benchmarks by visual shortcuts. We introduce VET-Bench, a synthetic diagnostic testbed featuring visually identical objects that necessitate tracking exclusively through spatiotemporal continuity. Our experiments reveal that current state-of-the-art VLMs perform at or near chance level on VET-Bench, exposing a fundamental limitation: an over-reliance on static frame-level features and a failure to maintain entity representations over time. We provide a theoretical analysis drawing connections to the state-tracking problem, proving that fixed-depth transformer-based VLMs are fundamentally limited in tracking indistinguishable objects without intermediate supervision due to expressivity constraints. To address this, we propose Spatiotemporal Grounded Chain-of-Thought (SGCoT): generating object trajectories as explicit intermediate states. Leveraging Molmo2's object tracking ability, we elicit SGCoT reasoning by fine-tuning on synthesized text-only data for alignment. Our method achieves state-of-the-art accuracy exceeding 90\% on VET-Bench, demonstrating that VLMs can reliably solve the video shell-game task end-to-end without external tools. Our code and data are available at \url{https://vetbench.github.io}.
\end{abstract}
% We hope this work facilitates future research on VLMs that leverage SGCoT for video perception and reasoning.

\section{Introduction}

Vision-Language Models (VLMs) have demonstrated remarkable proficiency in general video understanding and reasoning \citep{wu2024longvideobench, wang2025lvbench, hu2025video, fu2025video}. However, their low-level perception, specifically the ability to track entities over time, remains a critical bottleneck. We investigate how video VLMs perform on visual entity tracking tasks like the shell game. While such tasks are often effortless for humans and even some animals \citep{jaakkola2014animals}, they present significant challenges for current VLMs, highlighting a key limitation in fine-grained spatiotemporal perception, a capability essential for many downstream applications, such as embodied AI \citep{thompson2025rem, fung2025embodied} and general game-playing agents \citep{bolton2025sima, magne2026nitrogen}. 

% Existing video benchmarks such as the Perception Test \citep{patraucean2023perception} have attempted to evaluate this capability using real-world recordings of shell game (i.e., cups-game subset). However, our audit reveals that many cups-game clips contain appearance cues: distinctive or transparent cups allow models to solve the task by re-identifying the object from a single frame, rather than performing full temporal tracking across frames. To quantify this, we examine the 3,525 videos in the test split of the Perception Test and identify 189 cups-game clips. After filtering for cases with appearance cues, only 107 video-question pairs remain. Performance drops sharply on this subset. For instance, Gemini-3-Pro \citep{pichai2025new} drops from 80\% on the full dataset to 36.45\%. When restricted to the 3-cup setting and removing no-shuffling instances that test only object permanence, the performance further drops to 30.77\%, which is no better than random guessing ($1/3$). These findings suggest that visual entity tracking is a key limiting factor for model performance on the Perception Test. Easier instances that require little or no tracking can yield high accuracy and thereby inflate the aggregate score, while obscuring failures on the genuinely tracking-dependent cases. Consequently, we argue that the shell-game task exposes a critical failure mode: addressing this task is an important step toward genuine, human-level visual perception.

Existing video benchmarks such as the Perception Test \citep{patraucean2023perception} attempt to evaluate this capability using real-world recordings of the shell game (i.e., cups-game subset). However, our audit reveals that many cups-game clips contain appearance cues: distinctive or transparent cups allow models to solve the task by re-identifying the object from a single frame, rather than performing full temporal tracking across frames. Of the 189 cups-game clips in the 3,525-video test split, 107 video--question pairs remain after filtering out such cues. Performance drops sharply on this filtered subset: Gemini-3-Pro \citep{pichai2025new} drops from 80\% on the full dataset to 36.45\%. When restricted to the 3-cup setting and removing non-shuffling instances that test only object permanence, the performance further drops to 30.77\%---no better than random guessing ($1/3$). These findings suggest that visual entity tracking is a key limiting factor for model performance on the Perception Test. Easier instances that require little or no tracking can yield high accuracy and thereby inflate the aggregate score, while obscuring failures on the genuinely tracking-dependent cases. Consequently, we argue that the shell-game task exposes a critical failure mode: addressing this task is an important step toward genuine, human-level visual perception.

To systematically investigate this limitation, we introduce \textbf{Visual Entity Tracking Benchmark (VET-Bench)}, a synthetic diagnostic testbed designed to isolate spatiotemporal perception from frame-level appearance cues. By using sets of visually identical objects, VET-Bench forces models to track entities solely through motion continuity. Our extensive experiments reveal that current state-of-the-art models such as Gemini-3-Pro perform at or near chance level on VET-Bench. These results align with our audit of the Perception Test, suggesting that models rely heavily on static appearance features rather than genuine entity tracking. When these discriminative appearance cues are removed, the VLMs struggle to maintain coherent entity representations over time.
% , far below human performance

Humans solve the shell game through an intuitive perceptual process that requires little conscious effort. This raises a fundamental question: are transformer-based \citep{Vaswani+2017} VLMs inherently capable of solving the video shell game? By drawing connections to the state-tracking problem \citep{merrill2024illusion}, we provide a theoretical analysis proving that visual entity tracking is \textbf{$\mathbf{NC}^1$-complete}. This suggests that fixed-depth transformers are fundamentally limited in solving general visual entity tracking tasks without intermediate computation due to expressivity constraints \citep{merrill2023parallelism, merrill2024the, feng2023towards, li2024chain, huang2025transformers}. We empirically verify that VLMs struggle to learn the shell game using direct-answer supervision, even with sufficient training.

% The model fails to capture the correlation between the final answer and long sequences of video tokens, mirroring the well-known difficulty of learning parity tasks \citep{wies2023subtask, kim2025transformers} without intermediate supervision or chain-of-thoughts (CoTs) \citep{wei2022chain}.

To address this, we propose \textbf{Spatiotemporal Grounded Chain-of-Thought (SGCoT)}. Leveraging the object tracking capabilities of Molmo2 \citep{clark2026molmo2}, we transform perception into a reasoning process where the model explicitly generates spatiotemporal grounded object trajectories as intermediate states before providing the final answer. By fine-tuning on synthetic text-only data for alignment, we elicit SGCoT in Molmo2 and achieve state-of-the-art accuracy exceeding 90\% on VET-Bench, demonstrating that VLMs can reliably solve the video shell game task end-to-end without relying on external tools.

\begin{figure*}[t]
\centering
\includegraphics[width=\linewidth]{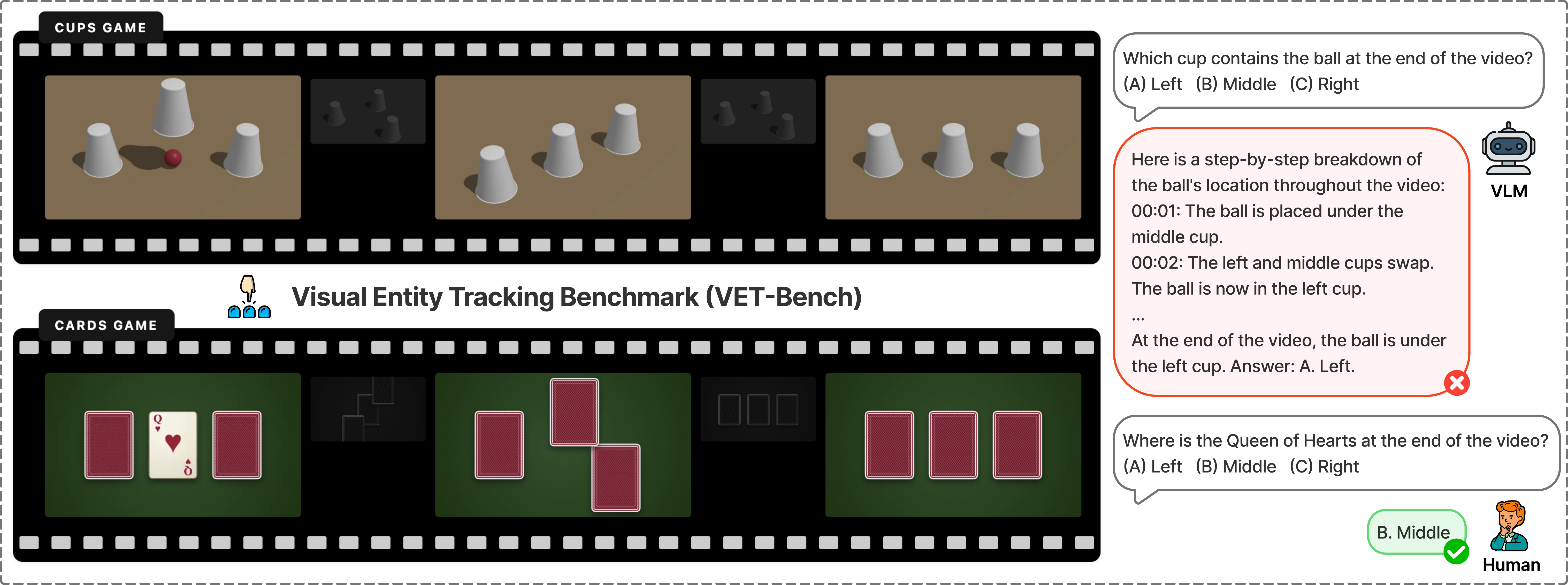}
\caption{Overview of VET-Bench.}
\label{fig:demo}
\end{figure*}

\paragraph{Contributions.} Our work offers the following key contributions:
\begin{itemize}

    \item We identify visual entity tracking as a critical bottleneck in the Perception Test benchmark and introduce VET-Bench, a synthetic diagnostic benchmark featuring visually identical objects, providing a rigorous testbed for evaluating the visual perception of VLMs by eliminating appearance-based shortcuts. 
    \item We conduct a comprehensive evaluation of state-of-the-art proprietary and open-source video VLMs, revealing that all tested models, including frontier models like Gemini-3-Pro, perform near random chance on VET-Bench.
    \item We provide an $\mathbf{NC}^1$-completeness proof for the visual entity tracking task, establishing the necessity of intermediate computation for transformer-based VLM architectures. We empirically verify that training only on direct answers without intermediate supervision fails to learn the shell game even with sufficient training.
    \item We demonstrate that VLMs are capable of reliably solving the shell game end-to-end without external tools through Spatiotemporal Grounded Chain-of-Thought (SGCoT). By transforming visual perception into a reasoning process and aligning Molmo2 to generate object trajectories as CoT, we achieve state-of-the-art performance exceeding 90\% accuracy on VET-Bench. 
\end{itemize}

\section{Data Generation}
\label{sec:data_gen}
\subsection{Task Formulation}
\label{sec:tasks}

We consider a video sequence $\mathcal{V} = \{F_0, \dots, F_T\}$ containing $N$ visually indistinguishable objects, identified by their initial indices $i \in [N] = \{1, \dots, N\}$. The shuffling process induces a global permutation $\pi$, mapping the object at initial index $i$ to its final index $\pi(i)$. Given a target object $i \in [N]$ designated at $t=0$ via a contextual cue $C$, which may take the form of a visual highlight or a linguistic description, the objective is to determine its terminal index $\pi(i)$ in the final frame. 

To ensure the task is well-posed without ambiguity, we enforce a continuity constraint: the maximum displacement $d$ of any object between consecutive frames must satisfy $2d < \Delta$, where $\Delta$ is the minimum spatial separation between any two objects. This constraint mirrors the temporal Nyquist criterion in video sampling, to prevent identity aliasing during object crossovers. In real-world videos, this condition is naturally satisfied by a sufficiently high frame rate relative to object speeds. In VET-Bench, each swap lasts 2 seconds, so the minimum sampling rate required to resolve a swap without ambiguity is 1 FPS (Section~\ref{sec:settings}). 

\subsection{Task Suite}
Similar to CLEVR \citep{johnson2017clevr}, CLEVRER \citep{yi2020clevrer}, and CATER \citep{girdhar2020cater}, our synthetic data generation pipeline provides fine-grained control over environmental parameters, reducing the risk of data leakage and overfitting commonly observed in real-world static benchmarks. Videos are rendered using \texttt{three.js}, supporting full synthetic variation in color, material, texture, lighting, and camera viewpoint. This design enables the generation of unlimited episodes, effectively mitigating the memorization issues inherent in fixed datasets. Moreover, the pipeline allows precise control over simulation parameters such as object count and swap count, enabling diagnostic evaluation of how individual factors influence model performance. We focus on the following two canonical visual entity tracking tasks from the Perception Test (Fig.~\ref{fig:demo}):
\begin{itemize}
    \item \textbf{Cups Game.} Also known as the \textbf{Shell Game}, this task requires tracking a ball hidden beneath visually identical opaque containers that undergo positional swaps. 
    
    \item \textbf{Cards Game.} Modeled after \textbf{Three-Card Monte} (``Find the Queen''), this task requires tracking a card after being flipped face-down and shuffled.
\end{itemize}

% \subsection{Task Features}
Crucially, VET-Bench simulates realistic shell-game videos by ensuring that no single frame reveals either (i) the \textbf{target’s identity} (Section~\ref{sec:compare_with_PT} on the Perception Test) or (ii) the \textbf{swap operation} (Section~\ref{sec:compare_with_VRB} on VideoReasonBench). By removing static, frame-level cues, VET-Bench forces VLMs to rely exclusively on fine-grained spatiotemporal perception.

% ---------- 9. Results ----------
\section{Experiment}
\label{sec:results}
\begin{figure*}
    \centering
    \includegraphics[width=\linewidth]{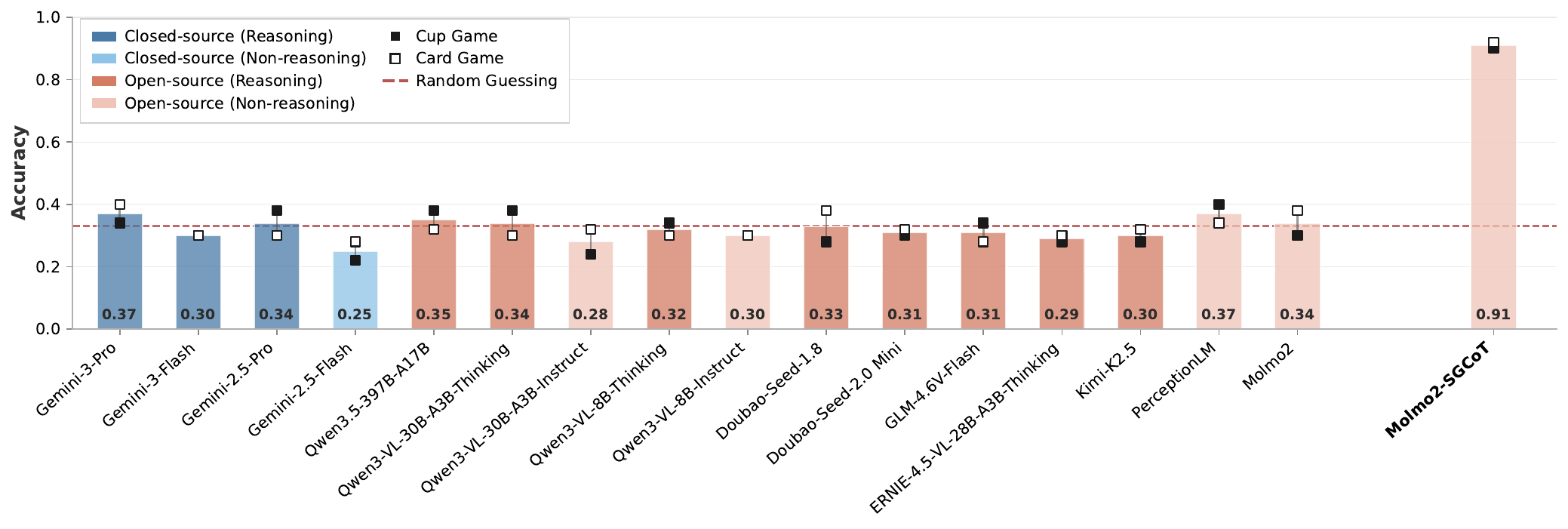}
\caption{Performance on VET-Bench, consisting of 50 cups-game and 50 cards-game videos featuring 3 objects and 5 swaps ($\sim$12 seconds). Existing VLMs all perform near random chance. Molmo2-SGCoT is a fine-tuned model based on Molmo2 that leverages Spatiotemporal Grounded Chain-of-Thought (SGCoT) to solve the shell game (Section~\ref{sec:sgcot}).}
    \label{fig:leaderboard}
\end{figure*}

\subsection{Experimental Setup}

\paragraph{Models.}
We evaluate a comprehensive suite of proprietary and open-source VLMs that natively support video input, including Gemini-3 \citep{pichai2025new}, Gemini-2.5 \citep{comanici2025gemini}, Qwen-3.5 \citep{qwen35blog}, Qwen3-VL \citep{bai2025qwen3}, GLM-4.6V-Flash \citep{vteam2025glm45vglm41vthinkingversatilemultimodal}, Ernie-4.5 \citep{ernie2025technicalreport}, Doubao-Seed-2.0 \citep{bytedance2026seed2,guo2025seed1}, Kimi-K2.5 \citep{team2026kimi}, PerceptionLM \citep{cho2025perceptionlm}, and Molmo2 \citep{clark2026molmo2}. For each model, we use the maximum supported frame rate, or the default sampling rate where unspecified. Reasoning models are evaluated using their default thinking configurations. Detailed prompts are included in Fig.~\ref{fig:gemini-3-pro-cup}.

\paragraph{Metrics.} All tasks are evaluated in the standard multiple-choice question answering (MCQA) format to facilitate automated grading. Performance is measured via Top-1 Accuracy. For a task involving $N$ objects, the random baseline is $1/N$. 

\label{sec:settings}
\paragraph{Settings.} VLMs adopt different frame sampling strategies. To prevent performance discrepancies arising solely from temporal resolution bottlenecks, we standardize each swap operation to a duration of 2.0 seconds. This ensures that models with sparse sampling (e.g., 1 FPS) capture at least 2 frames per swap—the theoretical minimum required to resolve a swap without ambiguity. We evaluate 5 swaps per episode, resulting in a total duration as short as 12 seconds (comprising a 2s initial phase followed by a 10s shuffling phase) to balance temporal resolution and context length constraints.

\subsection{Results}

The results in Fig.~\ref{fig:leaderboard} reveal that all evaluated Vision-Language Models (VLMs) perform near the random guessing baseline. This failure is universal across model sizes and for both reasoning and non-reasoning models. Based on our qualitative analysis of failure modes (Figs.~\ref{fig:gemini-3-pro-cup} to~\ref{fig:model_performances}), we categorize the errors into the following three primary patterns:

% We provide examples of responses from various VLMs in Appendix C. 

\paragraph{Direct Answer} Several models, particularly non-reasoning models such as Molmo2, PerceptionLM, Doubao-Seed-1.8, and GLM-4.6V-Flash, often output only a final answer without any CoTs. Their responses appear to be random guessing.

\paragraph{Coarse Description} Some models successfully identify the initial state but fail to perceive the critical shuffling phase, instead relying on high-level semantic descriptions (Fig.~\ref{fig:model_performances}). For example, \emph{``The cups are shuffled in a shell game-like motion, with the ball remaining under one of the cups throughout the video.''} (Qwen3-VL-8B-Instruct) and \emph{``00:00:02.000 onwards: All three cups begin moving around in a shuffling pattern. 00:00:12.000: The cups return to their original positions.''} (Kimi-K2.5). Such reasoning collapses fine-grained swap events into a coarse global description, making the final prediction effectively random.

\label{sec:hallucination}
\paragraph{Inaccurate Perception and Hallucination} Models with stronger reasoning capabilities, such as Gemini-3-Pro and Gemini-3-Flash, do attempt to generate explicit swap sequences. While the thinking steps are linguistically coherent and logically valid (e.g., \emph{``... Move 1 (00:02 - 00:03): The leftmost cup (with the ball) swaps with the middle cup. The ball is now in the middle position. Move 2 ...''}), they are often grounded in incorrect visual perceptions: models misidentify which entities are moving or hallucinate swaps that never occur (Figs.~\ref{fig:gemini-3-pro-cup} and~\ref{fig:gemini-3-pro-card}). Given that each reasoning step depends on the correctness of the preceding one, any mistake in an intermediate step inevitably leads to an incorrect final prediction. Symbolic reasoning cannot compensate for failures in pixel-level grounding. Consequently, the final answer remains statistically indistinguishable from random guessing.

% \begin{quote}
% ``Move 1 (00:02 - 00:03): The leftmost cup (with the ball) swaps with the middle cup. The ball is now in the middle position. Move 2 ...''
% \end{quote}

\begin{figure*}[t] % [t] pulls it to the top of the page
    \centering
    \begin{subfigure}{0.48\textwidth}
        \centering
        \includegraphics[width=\linewidth]{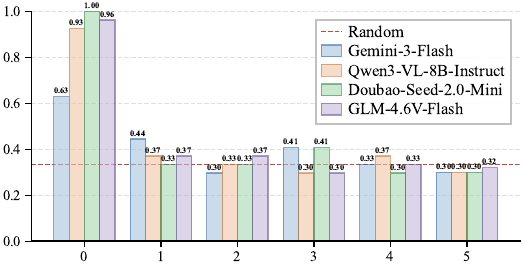}
        \caption{Accuracy vs. Swap Count.}
        \label{fig:swap_count_a}
    \end{subfigure}
    \hfill
    \begin{subfigure}{0.48\textwidth}
        \centering
        \includegraphics[width=\linewidth]{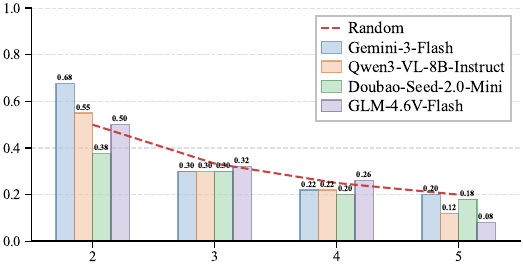}
        \caption{Accuracy vs. Object Count.}
        \label{fig:swap_count_b}
    \end{subfigure}
    \caption{Performance of VLMs under different swap and object counts.}
    \label{fig:combined_swap_count}
\end{figure*}

\subsection{Swap Count}
\label{sec:swap_count}
We further examine how the number of swap operations per episode affects performance (Fig. \ref{fig:swap_count_a}). For zero-swap episodes (which test object permanence only), most models achieve near-perfect accuracy except Gemini-3. Models like Doubao-Seed-2.0-Mini can generate the final answer directly without any CoTs, yet achieve perfect scores. This is expected, as the ball’s location is directly observable in the opening frames, mirroring the static frame-level cues. Surprisingly, performance drops substantially with just \textbf{one swap}, and quickly converges to the random-guessing baseline thereafter.

Interestingly, Gemini-3-Pro and Gemini-3-Flash exhibit a unique failure mode in zero-swap scenarios. With our standard prompt (\emph{``Which cup contains the ball at the end of the video?''}), they often correctly identify the initial state but then hallucinate a long sequence of non-existent swaps, causing the final prediction to be incorrect (Fig.~\ref{fig:gemini-3-flash-hallucination}). This behavior matches the \textbf{event hallucination} described by \cite{zhang2024eventhallusion}. When the prompt is changed to ask for the ball’s location at the start (\emph{``Which cup contains the ball at the start of the video?''}), accuracy returns to near-perfect. We did not observe this prompt-sensitive hallucination pattern in other evaluated VLMs.

\subsection{Object Count}
\label{sec:object_count}
We analyze performance across varying object counts $N=2,3,4$ (Fig.~\ref{fig:swap_count_b}). Notably, even at $N=2$, models fail to significantly outperform the random baseline. In this simplest case, the task reduces to a parity problem, where each swap simply inverts the state (position) of the target object. The final position depends solely on whether the total number of swaps is even or odd. As $N$ increases, accuracy scales inversely with the number of objects ($1/N$), further indicating that current VLMs do not perform genuine entity tracking but instead resort to random guessing.

\subsection{Comparison with the Perception Test}
\label{sec:compare_with_PT}
The Perception Test includes a small subset of shell-game-style clips, but several factors make it less diagnostic when evaluating visual entity tracking capability.

\textbf{Uncontrolled Complexity} In the Perception Test, difficulty varies with dataset composition in ways that directly affect performance:  (i) the number of cups varies from \textbf{2 to 4}, shifting the random baseline; and (ii) swap counts range from \textbf{0 to 16}, where 0-swap cases primarily test object permanence. As demonstrated in our previous analysis (Section~\ref{sec:swap_count} and~\ref{sec:object_count}), both object and swap counts significantly impact the performance scores.

\textbf{Visual Shortcuts} Some recordings include appearance cues (Figs.~\ref{fig:distinct_cups} to~\ref{fig:reveal_cups}) that bypass the need for tracking. These include (i) \textbf{distinct cups}, which allow for re-identification by appearance, and (ii) \textbf{transparent cups}, which directly reveal the target.

\begin{wraptable}{r}{0.4\textwidth}
  \centering
  \scriptsize 
  \begin{tabular}{lcc}
    \toprule
    \textbf{Model} & \textbf{Filtered} & \textbf{VET-Bench} \\
    \midrule
    \emph{Random} & $0.33$ & $0.33$ \\
    Gemini-3-Pro & $0.31$ & $0.34$ \\
    Gemini-3-Flash & $0.40$ & $0.30$ \\
    Qwen3-VL-8B    & $0.34$ & $0.30$ \\
    GLM-4.6V-Flash            & $0.42$ & $0.34$ \\
    Doubao-Seed-2.0       & $0.40$ & $0.30$ \\
    \bottomrule
  \end{tabular}
  \caption{Performance comparison on the filtered subset and VET-Bench.}
\label{tab:performance_results}
\end{wraptable}

\textbf{A Filtered Subset Reveals the Perception Bottleneck.} We construct a filtered subset targeting the standard shell-game setting: three identical, opaque cups (Fig.~\ref{fig:filtering_process}). From 3,525 videos in the test split, we identify 189 cups-game clips. After removing videos with visual shortcuts and ensuring at least one swap, we obtain 107 video-question pairs; restricting further to the strict 3-cup setting and excluding zero-swap cases yields 65 videos (Fig.~\ref{fig:performance_distribution}). As shown in Table~\ref{tab:performance_results}, all evaluated models collapse to near-chance performance on this filtered subset, aligning with their performance on VET-Bench. For example, Gemini-3-Pro drops from 0.80 on the full dataset to 0.31, no better than random guessing (0.33). Overall, these results confirm that while current VLMs excel in simpler instances by exploiting visual shortcuts, they struggle on the more demanding tasks where fine-grained spatiotemporal perception is required. This suggests that robust visual entity tracking remains a primary bottleneck within the Perception Test.

\subsection{Comparison with VideoReasonBench}
\label{sec:compare_with_VRB}
VideoReasonBench \citep{liu2025videoreasonbench} includes videos of cups-game-like tasks, but the swap operations are explicitly indicated by arrows overlaid on the frames (Fig.~\ref{fig:vrb}). These visual annotations effectively serve as symbolic ``swap tokens'', allowing models to reason about the state transitions based on static in-frame cues rather than from the underlying motion. In contrast, our benchmark represents a more realistic shell-game setting, where only moving objects are visible with no frame-level cues. Correctly solving the task therefore requires exploiting spatiotemporal continuity across frames. This design aligns with many recent video benchmarks (Section~\ref{sec:related_work}) where the decisive information is encoded in the temporal dynamics between frames, not within a single frame. Consequently, models like Gemini-2.5-Pro can achieve $56\%$ on VideoReasonBench while remaining near-chance on VET-Bench, where no explicit swap cues are provided.

\section{Theoretical Analysis}
Humans solve the shell game through an intuitive perceptual process requiring little conscious effort, yet current VLMs consistently fail at this task. This raises a fundamental question: is a transformer-based VLM inherently capable of solving the shell game? We address this question by analyzing the computational complexity of the decision version of visual entity tracking, defined as follows.

\begin{definition}[Visual Entity Tracking, $\mathrm{TRACK}_k$]
$\mathrm{TRACK}_k$ is the problem of tracking $k$ visually indistinguishable objects in a video $V = (F_0, \dots, F_T)$ of $T+1$ frames on an $H \times W$ grid, where $k$, $H$, and $W$ are constants. The input is assumed to satisfy localization and continuity conditions given in Appendix~\ref{app:proof}. Let $\pi$ be the global permutation that maps the $k$ objects from their initial lexicographic ordering of positions in frame $0$ to their final ordering in frame $T$. The problem asks whether $\pi$ is the identity permutation.
\end{definition}

% \begin{definition}[Visual Entity Tracking, $\mathrm{TRACK}_k$]
% $\mathrm{TRACK}_k$ is the problem of tracking $k$ visually indistinguishable objects in a video $V = (F_0, \dots, F_T)$ of $T+1$ frames on an $H \times W$ grid, where $k,H,W$ are constants. Assuming localization and continuity conditions (given in Appendix~\ref{app:proof}), the unique matching between objects in consecutive frames induces a permutation $\pi_t \in S_k$ mapping each object’s index under the lexicographic ordering of object positions in frame $t$ to its index in frame $t+1$. Let $\pi = \pi_{T-1} \circ \dots \circ \pi_0$ be the global permutation that maps the $k$ objects from their initial lexicographic ordering in frame $0$ to their final ordering in frame $T$. The decision version of $\mathrm{TRACK}_k$ asks whether $\pi$ is the identity permutation.
% \end{definition}

Drawing inspiration from state tracking problems \citep{liu2023transformers, merrill2024illusion}, we characterize the complexity of $\mathrm{TRACK}_k$ via the word problem for the symmetric group $S_5$, a canonical $\mathbf{NC}^1$-complete problem \citep{barrington1989bounded}.

\begin{definition}[Word Problem for $S_5$, $\mathrm{WORD}_{S_5}$]
Let the generators be the adjacent transpositions $\tau_j = (j, j+1)$ for $j \in \{1, 2, 3, 4\}$. Given a sequence $W = \langle \sigma_1, \dots, \sigma_N \rangle$ where each $\sigma_i \in \{\tau_1, \tau_2, \tau_3, \tau_4\}$, let $\Pi = \sigma_N \circ \dots \circ \sigma_1$. The output is \textsc{True} iff $\Pi$ is the identity.
\end{definition}

% \begin{theorem}
% For any fixed $k \ge 5$, $\text{TRACK}_k$ is $\mathbf{NC}^1$-complete.
% \end{theorem}
\begin{restatable}{theorem}{completeness} \label{thm:completeness}
For any fixed $k \ge 5$, $\mathrm{TRACK}_k$ is $\mathbf{NC}^1$-complete.
\end{restatable}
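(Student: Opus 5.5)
We prove the two directions separately: membership of $\mathrm{TRACK}_k$ in $\NC{1}$, and $\NC{1}$-hardness via a reduction from $\mathrm{WORD}_{S_5}$ that outputs a synthetic video in which each generator $\tau_j$ is realized as a short clip of two objects swapping.

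\textbf{Membership.} Because $H$, $W$ and $k$ are constants, each frame $F_t$ has constant size. Consider each pair of consecutive frames $(F_t, F_{t+1})$. By the localization condition we can read off the $k$ object positions in each frame. By the continuity condition ($2d<\Delta$), matching each object in $F_t$ to the unique object in $F_{t+1}$ within distance $d$ defines a permutation $\pi_t \in S_k$ relative to the lexicographic orderings. This is a function of constantly many input bits, so it is computed by a constant-size circuit, and all $T$ such circuits run in parallel. It remains to compute $\pi = \pi_{T-1}\circ\cdots\circ\pi_0$ and test whether it is the identity. This is an iterated product in the fixed finite group $S_k$, which a balanced binary tree of constant-size multiplication gadgets computes in depth $O(\log T)$ with bounded fan-in. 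Equivalently, it is the word problem for a finite monoid, which lies in $\NC{1}$ by \citet{barrington1989bounded}. Hence $\mathrm{TRACK}_k \in \NC{1}$.

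\textbf{Hardness.} We give a $\mathbf{AC}^0$ (indeed projection-style, local) reduction from $\mathrm{WORD}_{S_5}$ to $\mathrm{TRACK}_k$ for every $k\ge 5$.
\begin{enumerate}
\item Fix $k$ object slots arranged in a row on the grid, with spacing large enough for continuity. Extra objects beyond the first five never move.
\item Map each generator $\tau_j$ to a fixed constant-length clip of $c$ frames in which the objects at slots $j$ and $j+1$ swap. Object $j$ travels along an upper arc and object $j+1$ along a lower arc, moving at most $d$ per frame and staying at least $\Delta$ apart from every other object, so that the induced matching permutation over the clip is exactly $\tau_j$ and each intermediate frame satisfies localization. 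Because the objects are indistinguishable, they must occupy distinct cells at every time, which is what the arcs ensure.
\item Output the video $F_0$ (all objects at their home slots) followed by the concatenation of the clips for $\sigma_1,\dots,\sigma_N$. Each output frame depends only on the index $i$ of the corresponding $\sigma_i$ and the constant offset within its clip, so the map is computed by a constant-depth circuit.
\end{enumerate}
Every clip starts and ends with objects at their home slots, so the lexicographic orderings at clip boundaries coincide with the slot labels. The global permutation therefore equals $\sigma_N\circ\cdots\circ\sigma_1 = \Pi$, and $\pi$ is the identity iff $\Pi$ is. Since $\mathrm{WORD}_{S_5}$ is $\NC{1}$-complete under such reductions \citep{barrington1989bounded}, so is $\mathrm{TRACK}_k$.

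\textbf{Main obstacle.} The delicate step is the swap gadget. We must exhibit a concrete constant grid size and trajectory for which three things hold simultaneously: the appendix's localization and continuity conditions are satisfied in every frame, including the crossover frames; the per-frame matching is unique; and the lexicographic ordering convention at intermediate frames composes to exactly $\tau_j$ over the clip, rather than to some conjugate of it. We would first try a swap where the two objects move vertically out of the row, slide horizontally past each other on separate rows, and return, so that each object's positions stay far apart from all others throughout. Any residual relabeling within the clip is harmless as long as the clip's boundary frames are canonical, since the composition over the clip telescopes to the geometric swap.
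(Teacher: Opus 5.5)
Your proposal is correct and follows essentially the same route as the paper: membership by computing each $\pi_t\in S_k$ with a constant-size circuit and composing the $\pi_t$ in a balanced binary tree of depth $O(\log T)$, and hardness by a constant-block substitution from $\mathrm{WORD}_{S_5}$ using the same up--across--down / down--across--up swap gadget on a row of anchors, with the extra objects held stationary. The one step the paper spells out that you only assert is that the within-$d$ matching is a bijection; this follows from the triangle inequality applied once against the $>2d$ separation in frame $t+1$ (uniqueness of the successor) and once against the separation in frame $t$ (injectivity).
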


\begin{proof}[Proof Sketch]
Membership in $\mathbf{NC}^1$ follows by computing, for each adjacent pair of frames, the unique permutation $\pi_t\in S_k$ induced by the localization and continuity conditions, and then composing
$\pi=\pi_{T-1}\circ\cdots\circ\pi_0$
in $\NC{1}$ (Lemma~\ref{lem:1}). Hardness is established via reduction from $\mathrm{WORD}_{S_5}$ by constructing a video that physically realizes adjacent transposition generators (Lemma~\ref{lem:2}). Full proofs are provided in Appendix~\ref{app:proof}.
\end{proof}

Prior work shows that constant-depth transformers can be simulated within the circuit class $\mathbf{TC}^0$ \citep{merrill2023parallelism, merrill2024the}. Assuming the widely held conjecture that $\mathbf{TC}^0 \subsetneq \mathbf{NC}^1$, fixed-depth transformers are theoretically limited to solving $\mathbf{NC}^1$-complete problems like $\mathrm{TRACK}_k$ for $k \ge 5$ on arbitrary-length sequences. The $\mathbf{NC}^1$-completeness result therefore provides a theoretical foundation for the need for CoT in visual entity tracking \citep{feng2023towards, li2024chain, huang2025transformers}.

By the Krohn--Rhodes theorem \citep{krohn1965algebraic}, the complexity of a word problem depends on the algebraic structure of the underlying group. For $k \ge 5$, $S_k$ is non-solvable, and its word problem is $\mathbf{NC}^1$-complete \citep{barrington1989bounded}. In contrast, $S_2$ is a cyclic group, and the word problem reduces to a parity task, which lies in $\mathbf{TC}^0$. Our hardness result holds specifically for $k \ge 5$; for smaller object counts, the task may admit shortcut solutions \citep{liu2023transformers}. Nevertheless, even for $k=2$ (e.g. parity and coin-flip tasks), prior work \citep{wei2022chain, anil2022exploring, wies2023subtask, kim2025transformers} consistently shows that effective length generalization still requires intermediate supervision or CoT.

If objects possess unique visual identifiers, the task collapses from a sequential state-tracking problem into a parallelizable visual search problem ($\mathbf{AC}^0$). This explains why tracking objects with distinct appearances is often easier: models can rely on appearance-based rematching rather than entity tracking. This is analogous to the difference in hardness between source-target notation (UCI) and standard algebraic notation (SAN) \citep{toshniwal2022chess} for chess state tracking, as discussed by \cite{merrill2024illusion}.

While humans often perceive visual entity tracking as an intuitive perceptual ability, our analysis suggests it belongs to the same complexity class as other state-tracking tasks, such as tracking chess moves or tracking entities in a narrative \citep{merrill2024illusion}, typically considered difficult due to their latent reasoning requirements. We posit that the human visual system overcomes this limitation via foveal eye movements, which can be viewed as a form of ``physical CoT'' by explicitly updating the state across frames. 

\paragraph{Training with Direct Answer Fails in Practice}
\label{sec:training}

\begin{wrapfigure}{r}{0.4\textwidth} % {r} for right side, 0.4 is the width
  \centering
    \includegraphics[width=0.38\columnwidth]{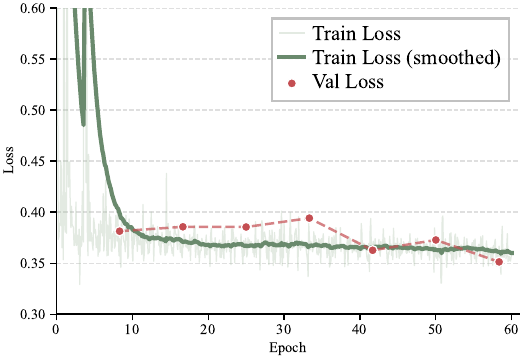}
    \caption{Training and validation loss for direct-answer training on 500 synthetic VET-Bench cups-game videos.}
  \label{fig:training}
\end{wrapfigure}

To empirically verify that VLMs struggle to learn visual entity tracking from direct-answer supervision without CoTs, we train Qwen2.5-VL-3B-Instruct on 500 synthesized cups-game videos (varying only in shuffle permutations) with an 8 FPS sampling rate. As shown in Fig.~\ref{fig:training}, the loss remains stagnant at the level of random chance even after 60 epochs. This phenomenon mirrors the difficulty of learning the parity task \citep{hahn2024sensitive, kim2025transformers}. Although parity is in $\mathbf{TC}^0$ and therefore expressible by transformers, the target label lacks low-order statistical correlations with input features, making it difficult to train end-to-end using gradient-based algorithms. In our case, the long sequence of video tokens makes it difficult for the model to map the input directly to the final label. Consequently, the model fails to capture the underlying dynamics and instead resorts to predicting based on label frequency within the training distribution.

\section{Spatiotemporal Grounded Chain-of-Thought}
\label{sec:sgcot}
Previous analysis suggests CoT is essential for solving general visual entity tracking tasks. While models like Gemini-3 produce logically valid CoTs, they still perform near-random on VET-Bench due to inaccurate perceptual grounding (Section~\ref{sec:hallucination}). To address this, we propose Spatiotemporal Grounded Chain-of-Thought (SGCoT). We leverage Molmo2 \citep{clark2026molmo2} for this task, as it is extensively trained on video object tracking and exhibits strong grounding capabilities that can be repurposed for visual entity tracking QA.

\paragraph{Limitations of the Base Model}
 When prompted for tracking, Molmo2 can generate the referred object's trajectory in the following structured format: 
 % \texttt{<tracks coords="timestamp object\_idx x y;...">Object</tracks>},
\begin{quote}
    \texttt{<tracks coords="timestamp object\_idx x y;...">Object</tracks>}
\end{quote}
where timestamps are spaced at 0.5-second intervals, and $x$, $y$ are normalized coordinates 0--1000. However, this behavior resembles a specialized tracking tool rather than a general-purpose instruction-following VLM. When prompted with \emph{``Where is the Queen of Hearts at the end of the video?''}, Molmo2 responds in direct-answer mode (e.g., \emph{``Left.''}) without explicitly performing spatiotemporal tracking, resulting in near-random performance (Fig.~\ref{fig:molmo2_direct_answer}). Our goal is therefore to leverage Molmo2’s native tracking capability, using generated trajectories as CoTs for question answering.

% \begin{center}
% \texttt{<tracks coords="timestamp objectID x y;...">Object</tracks>}
% \end{center}
\paragraph{Training for SGCoT Alignment}
To avoid the memory and compute cost of high-FPS video input, we propose a simple and highly efficient approach that aligns Molmo2-SGCoT to solve VET-Bench tasks end-to-end using minimal text-only training samples without video input. We adopt CoT prompting (e.g., by prefixing the query with ``Track [object]'') to elicit tracking (Fig.~\ref{fig:example_of_Molmo2-SGCoT}). Here is an example of Molmo2-SGCoT's prompt and response:

\begin{tcolorbox}[
    colback=gray!5, 
    colframe=gray!50, 
    arc=1mm, 
    boxrule=0.5pt, 
    left=4pt, right=4pt, top=4pt, bottom=4pt,
    fonttitle=\bfseries,
    toptitle=1pt, 
    bottomtitle=1pt,
    % title=Example of Molmo2-SGCoT on VET-Bench Card Game
]
    \small
    Track the Queen of Hearts and answer where is the Queen of Hearts at the end of the video?
    \vspace{0.5em}
    \hrule
\begin{Verbatim}[
    fontsize=\small, 
    breaklines=true, 
    breakanywhere=true, 
    breaksymbol={},
    commandchars=\\\{\} % This allows LaTeX commands inside Verbatim
]
<tracks coords="0.0 1 745 512; 0.5 1 745 512; 1.0 1 745 500; 1.5 1 742 425; 2.0 1 681 296; ... 12.0 1 \underline{247 500}">the Queen of Hearts</tracks> \textcolor{red}{Answer: left.}
\end{Verbatim}
\end{tcolorbox}
Each sample contains a synthesized \verb|<tracks></tracks>| trajectory as SGCoT, generated by a Python script. In VET-Bench, the final answer (``left'', ``middle'', or ``right'') depends only on the terminal location of the tracked object, which is explicitly defined by the final coordinate (underlined) in the SGCoT. During training, we mask the loss on all synthesized trajectory tokens within \verb|<tracks></tracks>| and supervise only the final answer (shown in \textcolor{red}{red}), encouraging the model to retain its grounding capability while learning to generate the final answer from the SGCoT. Notably, our method requires no explicit training on VET-Bench videos. While any randomly generated trajectory works, we find that using in-distribution synthesized trajectories causes minimal degradation of tracking ability. Our SGCoT alignment is highly efficient; one epoch over 300 samples completes in 3 minutes on a single A100 GPU, improving Molmo2's accuracy from near-random to 91\%. Training details are provided in Appendix~\ref{app:training}.

\paragraph{Error Analysis}
Despite its high accuracy, Molmo2-SGCoT occasionally fails. As shown in Fig.~\ref{fig:sgcot_wrong}, errors typically arise during the SGCoT perception stage when the model misidentifies visually identical objects. This manifests as abrupt ``jumps'' in the output trajectory, resulting in incorrect terminal locations and subsequent failures in the final answer.

\paragraph{Why SGCoT Succeeds and Other CoTs Fail}
\label{sec:why_sgcot_succeed}
The success of SGCoT is likely due to its fine-grained intermediate state representation. In particular, it explicitly aligns (i) when an event occurs (using discrete timestamps based on frame sampling) and (ii) where each entity is located (using fine-grained spatial coordinates). This combination yields an unambiguous state representation, better suited to long-horizon tracking. In contrast, many generic descriptive CoTs, as shown in Figs.~\ref{fig:gemini-3-pro-cup} to~\ref{fig:model_performances}, are loosely grounded in time and space. CoTs that ground events with coarse time ranges (e.g., ``00:01–00:02'') are inherently brittle; fine-grained motion events often have non-integer start and end times. These coarse intervals lead to temporal misalignment \citep{du2025svlta}, causing errors to accumulate. Similarly, CoTs that rely on coarse referring expressions (e.g., ``left'', ``right'') underspecify the state. SGCoT avoids both failure modes by discretizing time at consistent intervals and ensuring that a precise spatial state corresponds to each timestamp.

\section{Related Work}
\paragraph{Video Benchmarks for Spatiotemporal Perception}
\label{sec:related_work}
Existing video QA benchmarks \citep{zhou2025mlvu, song2025video, zhao2025mmvu, fang2024mmbench, li2024mvbench, li2023seed} mainly focus on general video understanding and reasoning. However, many fail to decouple temporal dynamics from static cues, enabling models to achieve high performance via shortcuts \citep{lei2023revealing}. 
Recent benchmarks---such as TempCompass \citep{liu2024tempcompass}, TemporalBench \citep{cai2024temporalbench}, E.T. Bench \citep{liu2024bench}, TVBench \citep{cores2024lost}, VideoVista \citep{li2024videovista}, TOMATO \citep{shangguan2024tomato}, MotionBench \citep{hong2025motionbench}, Tuna \citep{kong2025tuna}, Know-Show \citep{sugandhika2025know}, EgoTempo \citep{plizzari2025omnia}, VCR-Bench \citep{qi2025vcr}, MVP \citep{krojer2025shortcut}, AoTBench \citep{xue2025seeing}, SpookyBench \citep{upadhyay2025time}, Timeblind \citep{li2026timeblind}, NarrativeTrack \citep{ha2026narrativetrack}---expose the limitation that VLMs often rely on single-frame biases or language priors without genuinely modeling temporal dynamics. Following this line of work, our VET-Bench focuses on the fine-grained spatiotemporal perception of VLMs.

\paragraph{Reasoning with Grounding}
Recent VLMs \citep{bai2025qwen3, clark2026molmo2, deitke2025molmo} demonstrate strong grounding abilities through targeted training \citep{lin2014microsoft, yu2016modeling}. This facilitates reasoning with grounding to improve robustness and reliability \citep{jiang2025rex}. VoCoT \citep{li2025vocot} uses spatially anchored objects to guide multi-step reasoning. GCoT \citep{wu2025grounded, xia2025bootstrapping} explicitly injects bounding boxes into CoT to enhance faithfulness and reduce hallucination. We extend this to the temporal domain and show that SGCoT can reliably solve visual entity tracking tasks.

\paragraph{Entity Tracking} Entity tracking has been studied in textual domains \citep{toshniwal2022chess, kim2023entity}, with a focus on evaluating LLMs’ abilities to maintain representations of entity states. \cite{merrill2024illusion} prove that chess state tracking in UCI notation is $\mathbf{NC}^1$-complete. Building on their constructive proof, we show that visual entity tracking also falls into this class and necessitates CoT for generalization. MET-Bench \citep{cohen2025met} studies entity tracking in shell game and chess using image and text inputs. In contrast, our work addresses the visual entity tracking problem in videos.

\section{Limitations and Future Work}
We study a simplified setting in VET-Bench where the final answer is fully determined by the location information encoded in the generated SGCoT. In real-world scenarios, however, the final answer may also depend on visual evidence in the scene---for example, resolving general referring expressions (\emph{``Which cup contains the ball at the end of the video, from the person's point of view?''}), which require additional visual perception within the final frames. Future work could explore integrating SGCoT with complex referring expressions across arbitrary frames. Moreover, our analysis assumes localization and continuity conditions to ensure swap movements remain unambiguous. In practice, tracking is often more challenging: objects move in close proximity, overlap, exhibit motion blur, or undergo occlusion \citep{qi2022occluded, pothiraj2025capture}. Handling these complex scenarios may require incorporating models with stronger physical priors or world models to enable more robust tracking.

\section{Conclusion}
\label{sec:conclusion}
In this work, we identified visual entity tracking as a fundamental bottleneck in current VLMs. We introduced VET-Bench and demonstrated that state-of-the-art models perform near random chance when appearance cues are removed. Our theoretical analysis established that visual entity tracking is $\mathbf{NC}^1$-complete, suggesting that fixed-depth VLMs are inherently limited in expressivity when solving such tasks without CoT. We empirically verified that VLMs are unable to solve the shell game in direct-answer mode, even with sufficient training. Finally, by aligning Molmo2 to generate SGCoT before the final answer, we achieved state-of-the-art accuracy exceeding 90\% on VET-Bench. Ultimately, we hope this work paves the way for future VLMs to leverage SGCoT in broader video perception and reasoning.

\bibliography{colm2026_conference}
\bibliographystyle{colm2026_conference}

\clearpage
\appendix

\section{Detailed Audit of the Perception Test for Visual Entity Tracking}

\begin{figure}[H]
 \centering
 \begin{subfigure}[b]{0.48\textwidth}
     \centering
     \includegraphics[width=\textwidth]{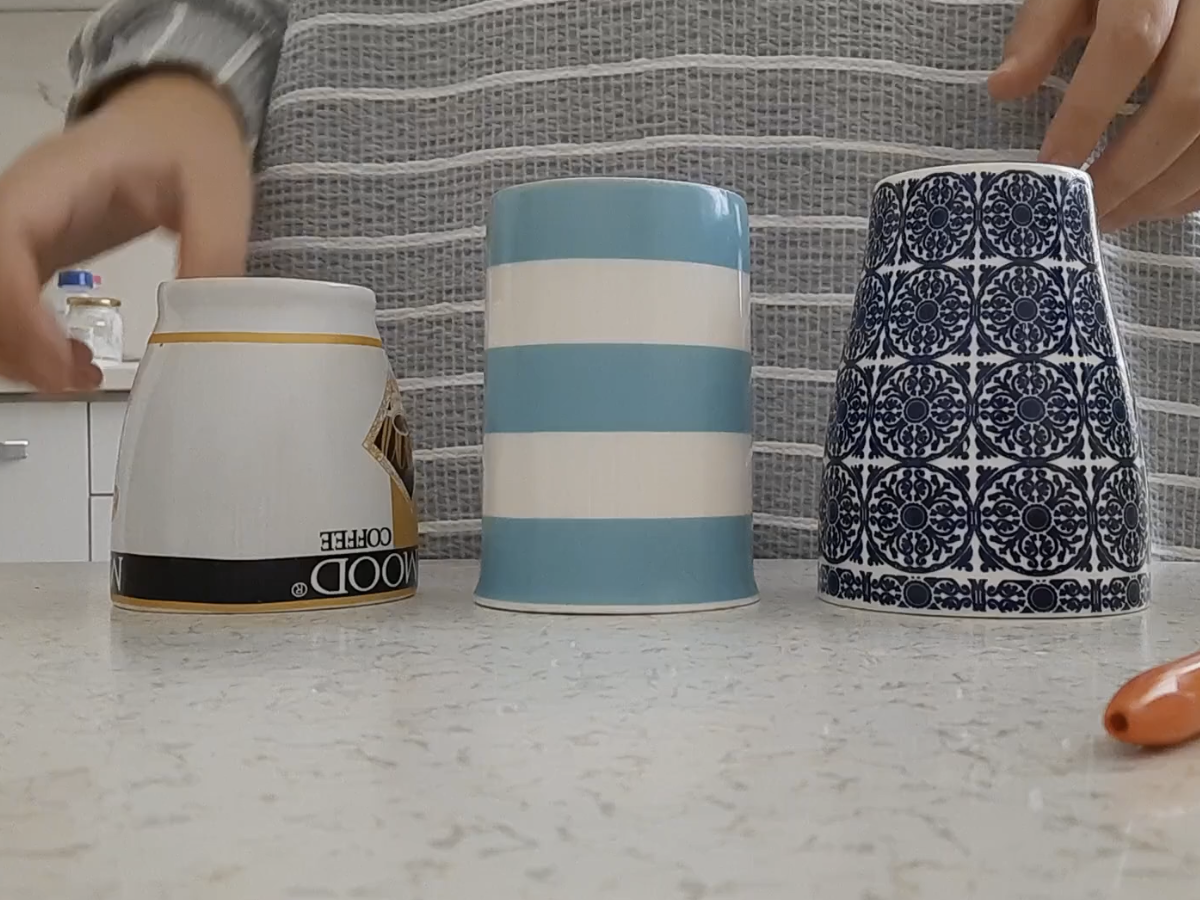}
     % \caption{Caption for first image}
 \end{subfigure}
 \hfill
 \begin{subfigure}[b]{0.48\textwidth}
     \centering
     \includegraphics[width=\textwidth]{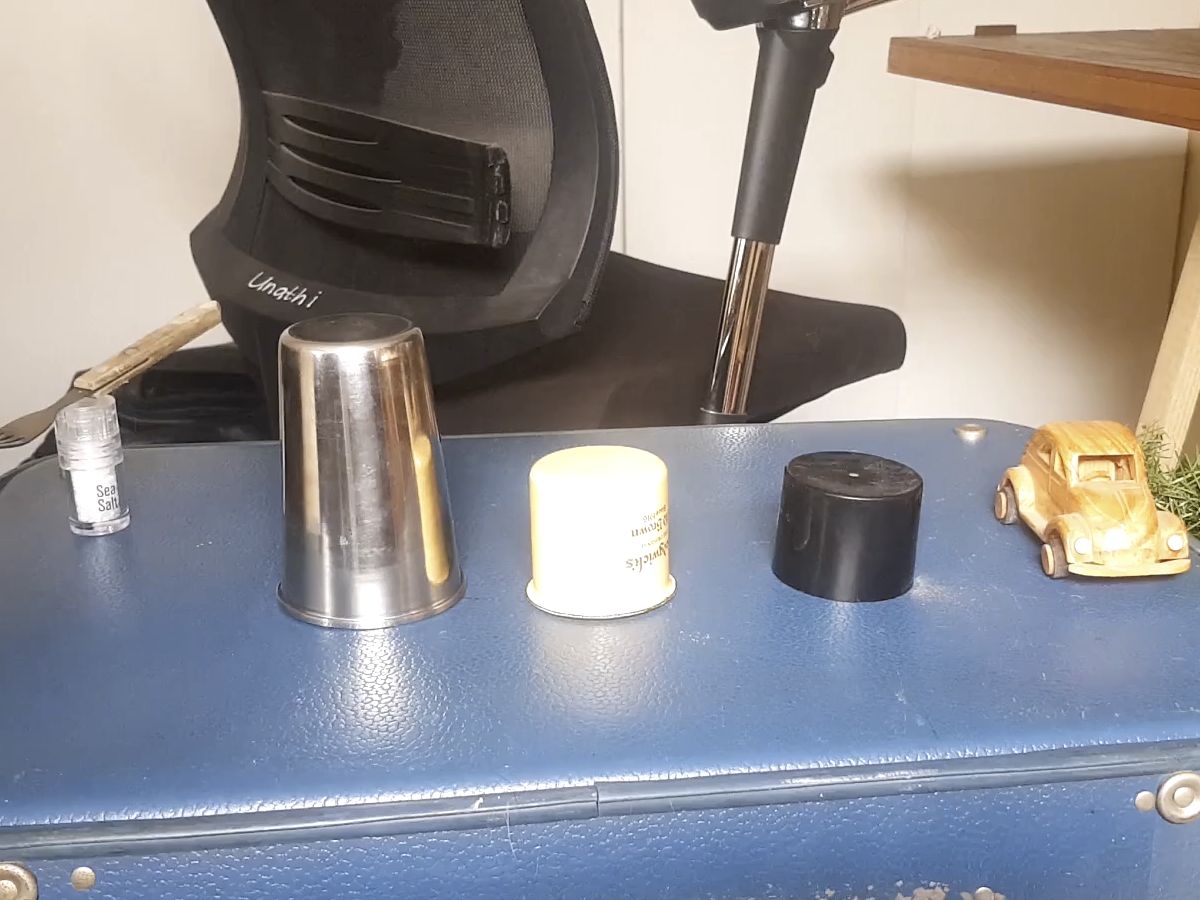}
     % \caption{Caption for second image}
 \end{subfigure}
 \caption{Example frames from videos involving distinct cups in the Perception Test.}
 \label{fig:distinct_cups}
\end{figure}

\begin{figure}[H]
 \centering
 \begin{subfigure}[b]{0.48\textwidth}
     \centering
     \includegraphics[width=\textwidth]{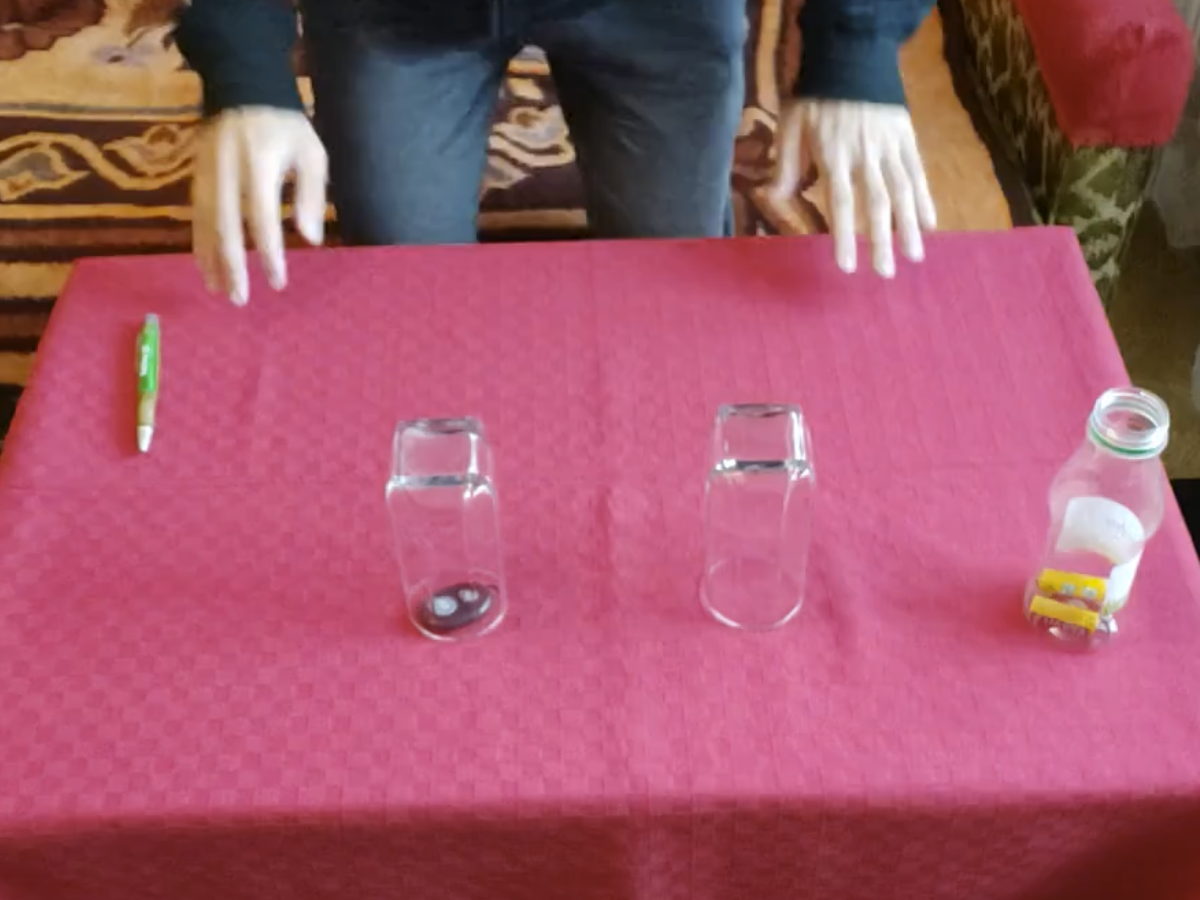}
     % \caption{Caption for first image}
 \end{subfigure}
 \hfill
 \begin{subfigure}[b]{0.48\textwidth}
     \centering
     \includegraphics[width=\textwidth]{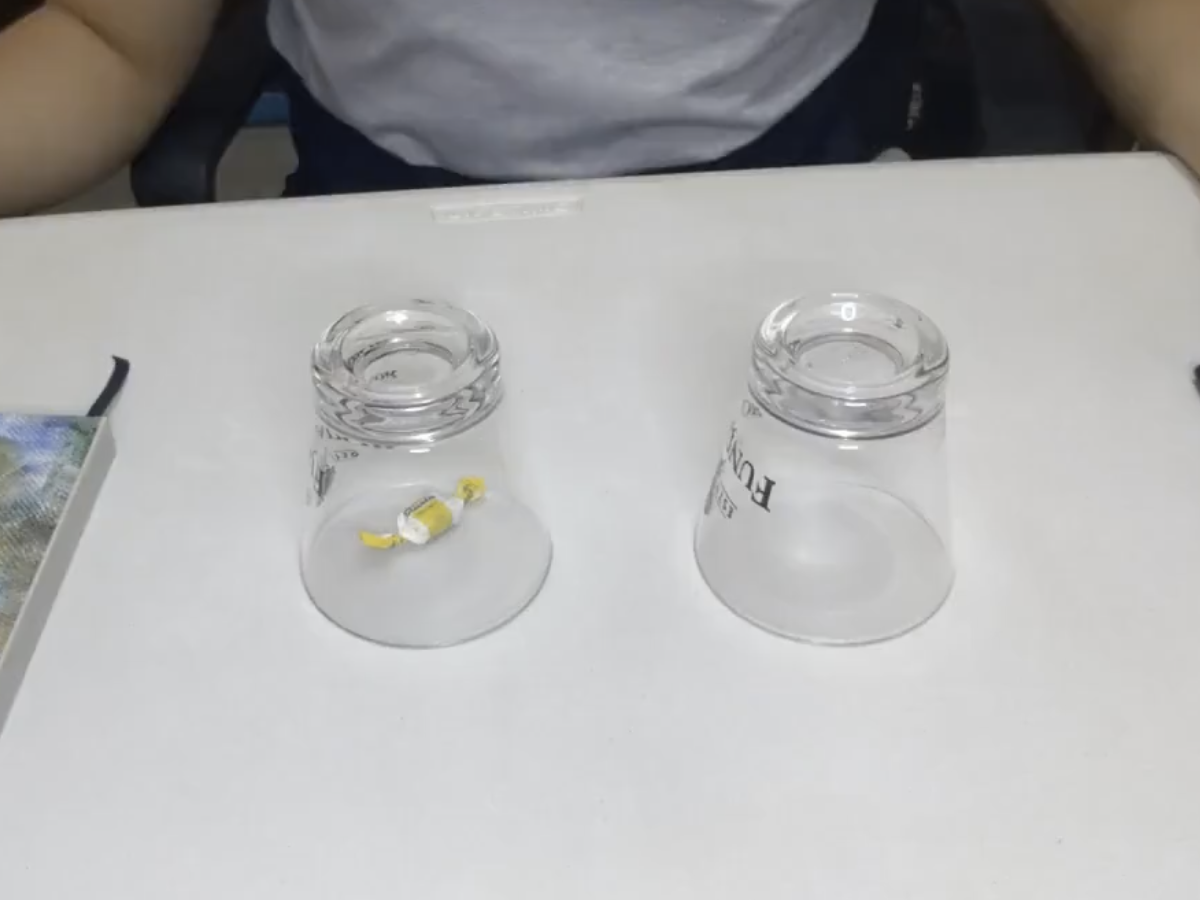}
     % \caption{Caption for second image}
 \end{subfigure}
 \caption{Example frames from videos involving transparent cups in the Perception Test.}
 \label{fig:transparent_cups}
\end{figure}

\begin{figure}[H]
 \centering
 \begin{subfigure}[b]{0.48\textwidth}
     \centering
     \includegraphics[width=\textwidth]{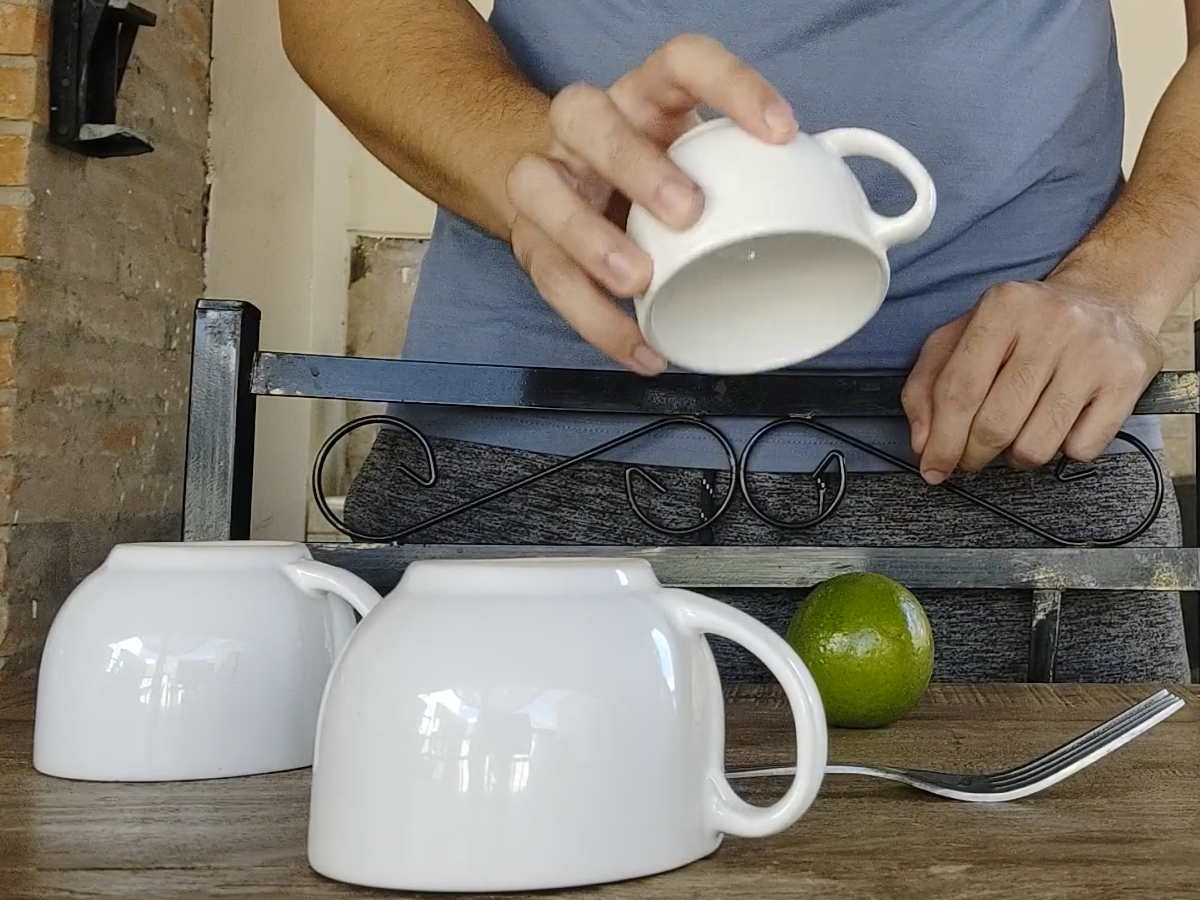}
     % \caption{Caption for first image}
 \end{subfigure}
 \hfill
 \begin{subfigure}[b]{0.48\textwidth}
     \centering
     \includegraphics[width=\textwidth]{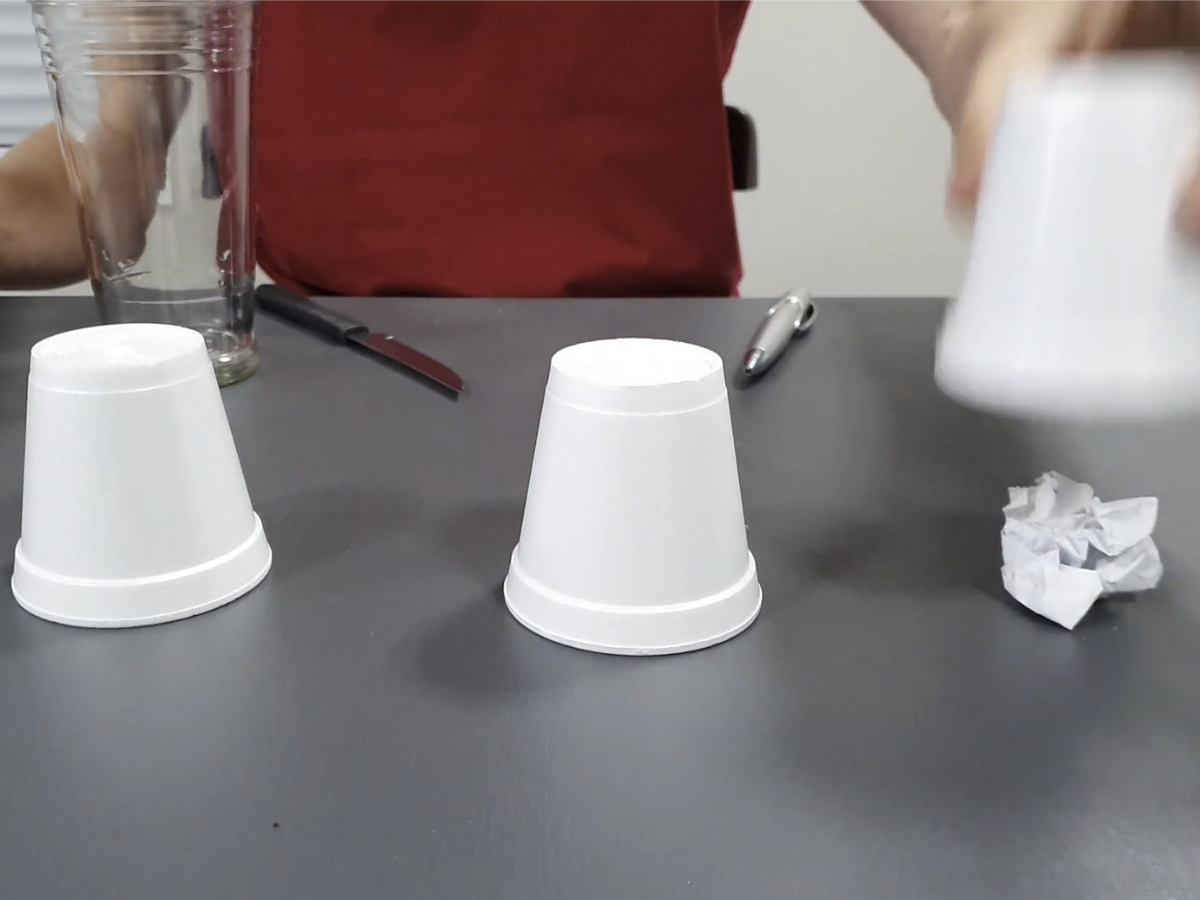}
     % \caption{Caption for second image}
 \end{subfigure}
 \caption{Visual shortcuts in videos \textbf{\#7632} and \textbf{\#9896}. These videos contain unedited cut frames at the end that reveal the answer by showing the cups being lifted. This unintended disclosure provides a shortcut that allows the model to simply look at the final state to identify the object's location, making the tracking of the shuffle irrelevant.}
 \label{fig:reveal_cups}
\end{figure}

\begin{figure}[H]
 \centering
 \begin{subfigure}[b]{0.48\textwidth}
     \centering
     \includegraphics[width=\textwidth]{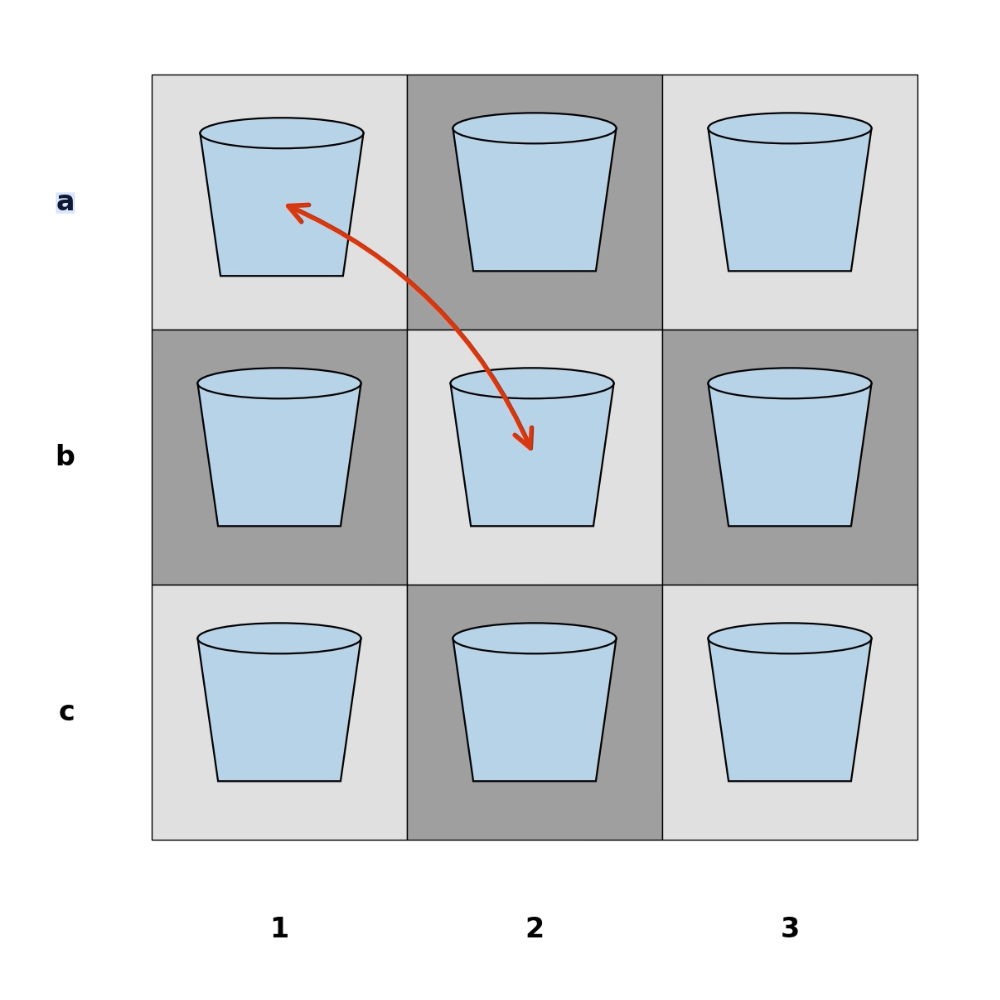}
     % \caption{Caption for first image}
     \label{fig:image1}
 \end{subfigure}
 \hfill
 \begin{subfigure}[b]{0.48\textwidth}
     \centering
     \includegraphics[width=\textwidth]{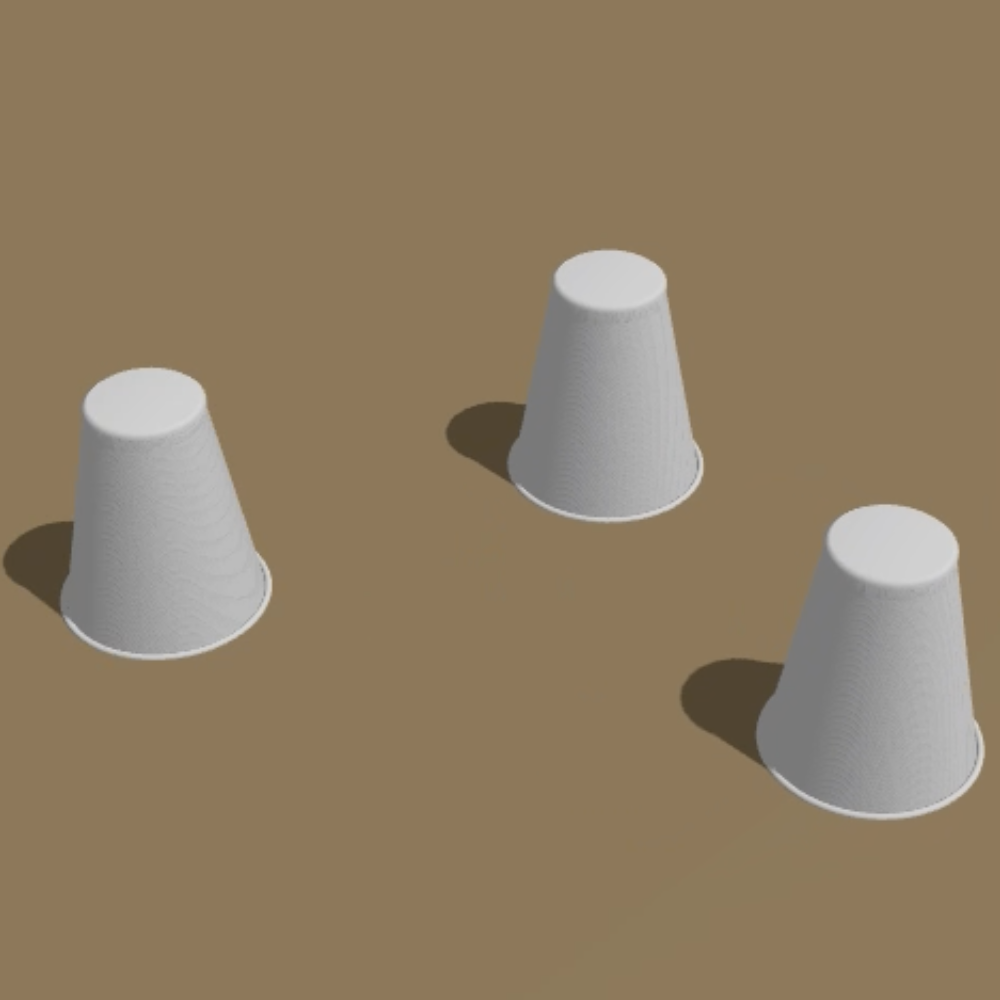}
     % \caption{Caption for second image}
     \label{fig:image2}
 \end{subfigure}
 \caption{Comparison of the cups-game tasks in VideoReasonBench (left) and VET-Bench (right). VideoReasonBench provides explicit frame-level cues highlighting swap operations (indicated by red arrows), which are absent in VET-Bench.}
 \label{fig:vrb}
\end{figure}

\begin{figure}[H]
    \centering
    \includegraphics[width=0.6\columnwidth]{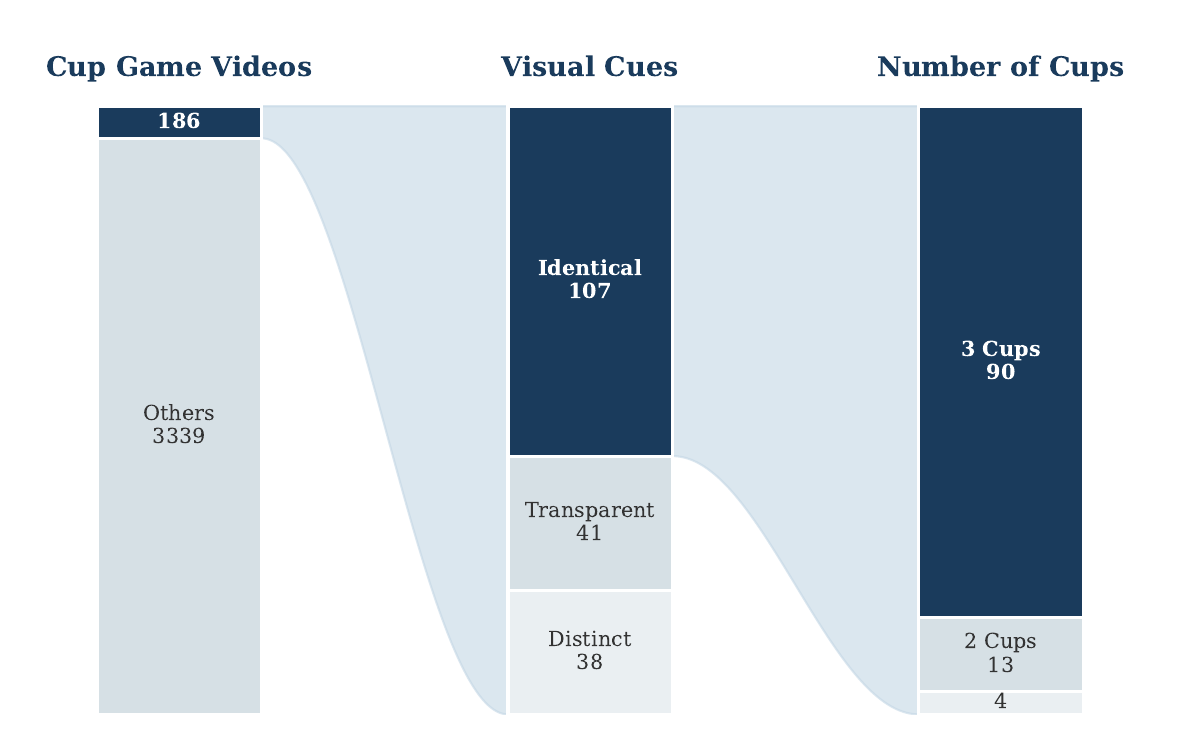}
    \caption{Filtering process for the identical 3-cup setting in the Perception Test test split.}
    \label{fig:filtering_process}
\end{figure}

\begin{figure}[H]
    \centering
    \includegraphics[width=0.8\columnwidth]{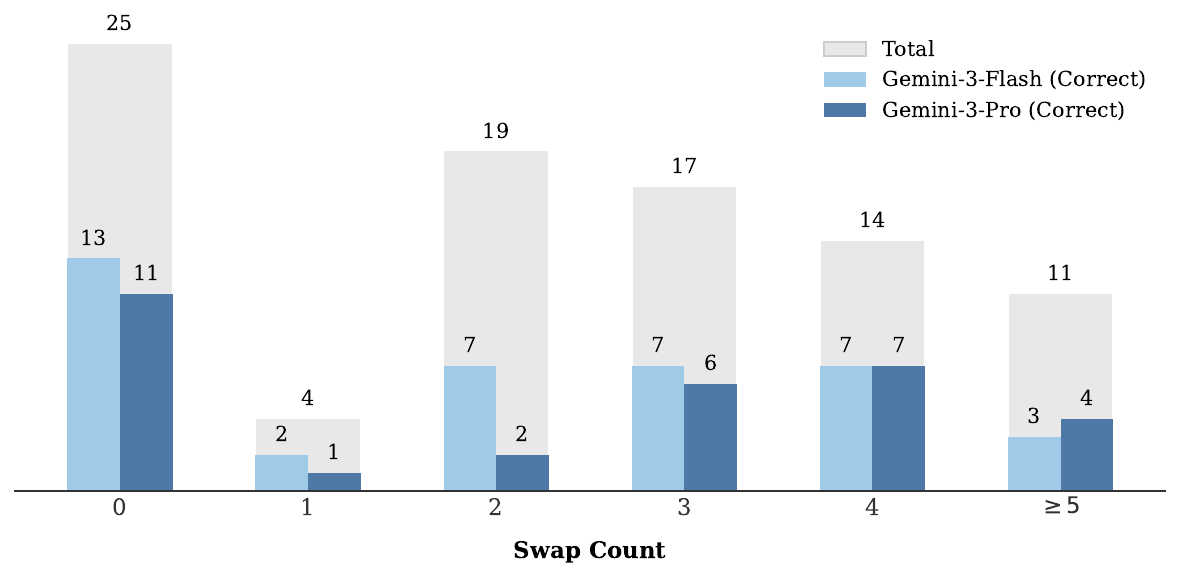}
    \caption{Distribution of swap counts in the filtered Perception Test subset, along with the number of correct responses from Gemini-3-Pro and Gemini-3-Flash.}
    \label{fig:performance_distribution}
\end{figure}

\clearpage
\section{Proof of Theorem 1}
\label{app:proof}

Recall that the decision problem $\mathrm{TRACK}_k$ takes as input a video $V=(F_0,\dots,F_T)$ of $T{+}1$ frames on an $H\times W$ pixel grid, where $k$, $H$, and $W$ are constants and the input length is
$n=\Theta(T)$, and asks whether the permutation $\pi$ from frame $0$ to frame $T$ is the identity.

The input is assumed to satisfy the following conditions:

\begin{enumerate}
    \item \textbf{Localization.}
    Each frame $F_t$ contains exactly $k$ object centers (grid coordinates), denoted as $C_t = \{c_{t,1}, \dots, c_{t,k}\}$ for $t \in \{0, \dots, T\}$, where $c_{t,i} \in [H] \times [W]$, and the $k$ object centers can be recovered from each frame in $\mathbf{NC}^1$.

    \item \textbf{Continuity.}
    Each object center moves by at most a fixed distance $d$ between consecutive frames, and any two centers within any single frame are at distance $>2d$, where, for convenience, we take $\mathrm{dist}(\cdot,\cdot)$ to be the $L_1$ (Manhattan) distance.
\end{enumerate}

The conditions above are standard prerequisites to ensure that the problem is well-posed in controlled environments and are easily satisfied in practice. The localization condition ensures that object centers are efficiently recoverable; for example, using constant-size visual markers allows template matching in $\mathbf{NC}^1$. The continuity condition formalizes the constraint discussed earlier (Section~\ref{sec:tasks} and~\ref{sec:settings}) to prevent ambiguity by guaranteeing a unique correspondence between objects in consecutive frames. Practically, this can be achieved by using a sufficiently high frame rate relative to object speeds, so that objects move only a small distance between adjacent frames. \\

% Let $\hat c_{t,1},\dots,\hat c_{t,k}$ be the centers in frame $t$ ordered lexicographically by $(x,y)$.
% Define $\pi_t\in S_k$ by letting $\pi_t(i)=j$ iff $\hat c_{t,i}$ matches the unique center $\hat c_{t+1,j}$ in frame $t{+}1$ within distance $d$. Let $\pi=\pi_{T-1}\circ\cdots\circ\pi_0$.
% $\mathrm{TRACK}_k$ outputs \textsc{True} if and only if $\pi$ is the identity permutation.

\begin{lemma}
\label{lem:1}
$\mathrm{TRACK}_k \in \mathbf{NC}^1$.
\end{lemma}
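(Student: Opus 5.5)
We show that $\mathrm{TRACK}_k$ has $\mathbf{NC}^1$ circuits of depth $O(\log T)$ and polynomial size. The computation splits into a local stage that recovers one permutation per frame pair in constant depth (or $\mathbf{NC}^1$) in parallel, and a global stage that multiplies these $T$ elements of the fixed finite group $S_k$ by a balanced binary tree.

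\textbf{Step 1 (localization).} By the localization condition, the center set $C_t$ of each frame $F_t$ can be computed by an $\mathbf{NC}^1$ circuit. Since $H,W,k$ are constants, each frame is a constant-size input, so we instantiate $T+1$ copies of this circuit in parallel. Each copy has constant size and depth, or at worst depth $O(\log n)$. We then sort the $k$ centers lexicographically to get $\hat c_{t,1},\dots,\hat c_{t,k}$. This is a function on a constant number of bits, so a constant-size circuit suffices.

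\textbf{Step 2 (local matching).} For each $t<T$ we define $\pi_t(i)=j$ iff $\mathrm{dist}(\hat c_{t,i},\hat c_{t+1,j})\le d$. We must check that this is a well-defined permutation.
\begin{itemize}[label=$\cdot$]
\item \emph{Existence.} The object at $\hat c_{t,i}$ moves by at most $d$, so it lands within distance $d$ of some center in frame $t+1$.
\item \emph{Uniqueness.} If two centers $\hat c_{t+1,j}$ and $\hat c_{t+1,j'}$ were both within $d$ of $\hat c_{t,i}$, then by the triangle inequality in $L_1$ their distance would be $\le 2d$. This contradicts the separation $>2d$ in frame $t+1$.
\item \emph{Injectivity.} The same triangle-inequality argument, applied to two frame-$t$ centers matched to one frame-$(t+1)$ center, contradicts the separation $>2d$ in frame $t$.
\end{itemize}
Hence $\pi_t\in S_k$, and it coincides with the true object correspondence. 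Each $\pi_t$ depends only on $(C_t,C_{t+1})$, which is constant-size data, so it is computed by a constant-size circuit, all $T$ copies in parallel.

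\textbf{Step 3 (composition).} Encode each element of $S_k$ in $\lceil\log_2 k!\rceil=O(1)$ bits. Group multiplication is then a fixed Boolean function on $O(1)$ bits with a constant-size, bounded fan-in circuit. Compose $\pi=\pi_{T-1}\circ\cdots\circ\pi_0$ with a balanced binary tree of these multiplication gadgets. Because composition is associative, the order of bracketing does not matter. The tree has depth $O(\log T)$ gadget levels, so total Boolean depth is $O(\log T)=O(\log n)$ and size is $O(T)$. A final constant-size comparator checks whether $\pi=\mathrm{id}$. The circuit family is uniform, since its structure depends only on $T$.

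\textbf{Main obstacle.} Step 3 is standard, since the word problem over a fixed finite group lies in $\mathbf{NC}^1$. The delicate point is Step 2: the distance bound has to give an actual bijection that is consistent with physical identity. This rests on the triangle-inequality argument with the $>2d$ separation, applied in both frames. A secondary point is that the localization circuits have depth $O(\log n)$ and sit in series with the composition tree, so their depths add rather than multiply. The overall depth therefore stays $O(\log n)$.
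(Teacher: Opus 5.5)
Your proposal is correct and takes essentially the same route as the paper. Both proofs sort the centers lexicographically in each frame, both use the triangle-inequality argument with the $>2d$ separation (in frame $t+1$ for uniqueness, in frame $t$ for injectivity) to show $\pi_t \in S_k$, and both compose $\pi = \pi_{T-1}\circ\cdots\circ\pi_0$ with a balanced binary tree of depth $O(\log n)$ before a constant-size identity check.
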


\begin{proof}

We show that the permutation $\pi$ can be computed in $\mathbf{NC}^1$.

For each frame $t$, let the localization procedure output the set of $k$ distinct centers as grid coordinates.
Since $H$ and $W$ are constants, each coordinate has constant bit-length. Hence, per-frame computations are constant-size.

Let $\hat c_{t,1},\dots,\hat c_{t,k}$ be the centers sorted lexicographically by $(x,y)$ (i.e., by $x$ coordinate, breaking ties by $y$).
Since both $k$ and the coordinate length are constant, this sorting is computable by a constant-size circuit for each frame.

By the continuity condition, each $\hat c_{t,i}$ has at least one successor $\hat c_{t+1,j}$ within distance $d$.
Such a $j$ is unique. Suppose there exist distinct indices $j\neq j'$ such that
\[
\mathrm{dist}(\hat c_{t,i},\hat c_{t+1,j})\le d
\qquad\text{and}\qquad
\mathrm{dist}(\hat c_{t,i},\hat c_{t+1,j'})\le d,
\]
then
\[
\mathrm{dist}(\hat c_{t+1,j},\hat c_{t+1,j'})\le 2d,
\]
contradicting the continuity condition in frame $t+1$.
Define $\pi_t:[k]\to[k]$ by $\pi_t(i)=j$.

$\pi_t$ is injective: Suppose $\pi_t(i)=\pi_t(i')=j$ for some $i\neq i'$.
Then
\[
\mathrm{dist}(\hat c_{t,i},\hat c_{t+1,j})\le d
\qquad\text{and}\qquad
\mathrm{dist}(\hat c_{t,i'},\hat c_{t+1,j})\le d,
\]
so
\[
\mathrm{dist}(\hat c_{t,i},\hat c_{t,i'})\le 2d,
\]
again contradicting the continuity condition, now in frame $t$.
Thus $\pi_t$ is injective. Since $\pi_t$ is an injective map from a finite set of size $k$ to itself, it is bijective, and hence $\pi_t\in S_k$.

Because the coordinates have constant bit-length and $k$ is fixed, each predicate
$\mathrm{dist}(\hat c_{t,i},\hat c_{t+1,j})\le d$
is computable by a constant-size circuit, and so is the selection of the unique $j$ for each $i$.
Hence, each $\pi_t$ is computable by a constant-size circuit.

The induced global permutation is
\[
\pi=\pi_{T-1}\circ\cdots\circ\pi_0.
\]
Since $k$ is fixed, composing two permutations in $S_k$ is computable by a constant-size circuit.
Computing this product in a balanced binary tree gives depth $O(\log T)=O(\log n)$ and polynomial size.

Finally, the localization step is in $\mathbf{NC}^1$ by assumption, and all per-frame sorting and matching operations are constant-size, repeated over $T=\Theta(n)$ frames. Thus the total size is polynomial, and the overall depth is $O(\log n)$. Therefore, the global permutation can be computed in $\mathbf{NC}^1$.

Since $k$ is fixed, checking whether the resulting permutation $\pi$ is the identity is computable by a constant-size circuit. Hence, $\mathrm{TRACK}_k \in \mathbf{NC}^1$.
\end{proof}

\begin{lemma}
\label{lem:2}
$\mathrm{WORD}_{S_5} \le_{\mathbf{AC}^0} \mathrm{TRACK}_k$ for any fixed $k \ge 5$.
\end{lemma}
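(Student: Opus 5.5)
We reduce by a uniform, local construction. Given a word $W=\langle\sigma_1,\dots,\sigma_N\rangle$ over $\{\tau_1,\dots,\tau_4\}$, we build a video $V(W)$ consisting of an initial frame followed by $N$ blocks of $L$ frames each, where $L$ is a constant depending only on $k$ and $d$. The $k$ objects start on a fixed horizontal baseline row at evenly spaced slots $s_1<\dots<s_k$. Adjacent slots are separated by much more than $2d$, and there is room above and below the row. Objects $6,\dots,k$ (present only if $k>5$) never move; they serve as padding.

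Block $i$ physically realizes $\sigma_i=\tau_j$ with a fixed \emph{swap gadget} $G_j$, a constant sequence of $L$ frames.
\begin{enumerate}
\item The object in slot $s_j$ moves up by steps of size at most $d$, while the object in slot $s_{j+1}$ moves down.
\item Both objects travel horizontally toward each other's slot on separate rows.
\item Both objects descend or ascend back onto the baseline, now in exchanged slots.
\end{enumerate}
All other objects stay put. Because the two movers are on rows separated by more than $2d$ while passing, and every mover stays more than $2d$ from every stationary object, each frame satisfies the continuity condition. Localization also holds: each object is drawn as a fixed constant-size marker on an otherwise blank $H\times W$ grid, so centers are recoverable by local template matching. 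Only the constantly many frames $G_1,\dots,G_4$ are ever needed, and $H,W$ are fixed constants chosen large enough for these gadgets.

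Next we check that the construction is correct. By Lemma~\ref{lem:1}, the per-frame matchings are unique. Within gadget $G_j$, the induced permutation $\pi_t$ on lexicographic indices is the identity except at frames where the lexicographic order of the two movers changes. The composite of one block's $\pi_t$'s maps the object starting at slot $s_j$ to slot $s_{j+1}$ and vice versa, and fixes everything else. Because every block starts and ends with all objects on the baseline in slots $s_1,\dots,s_k$, block-level lexicographic indices coincide with slot indices. The composite over the block is therefore exactly $\tau_j$ embedded in $S_k$, acting on indices $1,\dots,5$. The global permutation is then $\pi=\sigma_N\circ\dots\circ\sigma_1$ viewed inside $S_5\le S_k$. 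This permutation is the identity iff $\Pi$ is the identity, so $W\in\mathrm{WORD}_{S_5}$ iff $V(W)\in\mathrm{TRACK}_k$.

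Finally, we check that the reduction is $\mathbf{AC}^0$ computable. Each output pixel of frame $1+(i-1)L+\ell$ depends only on the constant-size symbol $\sigma_i$, through a fixed lookup table indexed by $(j,\ell)$ and pixel position. The initial frame is constant. Hence each output bit is a constant-size function of $O(1)$ input bits at an index computable by simple arithmetic ($i=\lfloor t/L\rfloor$ with constant $L$). The output length is $\Theta(N)$, so this is a DLOGTIME-uniform $\mathbf{AC}^0$ (indeed projection-like) reduction.

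The main obstacle is the geometric bookkeeping in the swap gadget. Inside $G_j$ we must verify the $>2d$ separation for every pair of objects in every frame, with step size $\le d$ in $L_1$. We must also ensure the crossing is unambiguous, and confirm that intermediate reorderings of lexicographic indices compose to precisely the transposition rather than to some other element. The cleanest approach is to fix explicit coordinates: rows $y_0$, $y_0\pm 3d$ and slot spacing $6d$. Listing the trajectories piecewise then lets each check be read off directly. The lexicographic subtlety disappears because only block endpoints matter for the composite.
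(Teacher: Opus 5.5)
Your proposal is correct and follows essentially the same construction as the paper. In both, five anchors sit on a common row, and a constant-length swap gadget exchanges two adjacent objects by routing one above and one below the baseline while the remaining $k-5$ objects stay fixed; the reduction is then an $\mathbf{AC}^0$ constant-block substitution of gadgets for generators, and your extra care with intermediate lexicographic reorderings and uniformity only sharpens that argument.
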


\begin{proof}
We reduce from the word problem over $S_5$ with generators
\[
\tau_1=(12),\qquad \tau_2=(23),\qquad \tau_3=(34),\qquad \tau_4=(45),
\]
which is $\mathbf{NC}^1$-complete.

Fix constants $d>0$, $L>2d$, and $h>d$. Place the first five anchor positions at
\[
p_i=((i-1)L,0),\qquad i=1,2,3,4,5,
\]
so they lie on the same horizontal row and consecutive anchors are distance $L$ apart. Place the remaining anchors
\[
p_6,\dots,p_k
\]

at fixed positions at distance greater than $2d$ from the region traversed by the gadget trajectories. Thus all inactive objects can remain stationary throughout the video without violating the continuity condition.

We use only the first five anchors to simulate the generators of $S_5$; the remaining anchors are occupied by stationary objects. The objects are rendered as fixed constant-size bright templates on a black background, so the object centers can be recovered trivially and the localization condition is satisfied.

For each $i\in\{1,2,3,4\}$, we construct a constant-length gadget $G_i$ that swaps the objects at $p_i$ and $p_{i+1}$ and leaves all other objects fixed:
\[
p_i=((i-1)L,0),\qquad p_{i+1}=(iL,0).
\]
The gadget moves the object at $p_i$ along the three-segment path
\[
((i-1)L,0)\to ((i-1)L,h)\to (iL,h)\to (iL,0),
\]
and moves the object at $p_{i+1}$ along the three-segment path
\[
(iL,0)\to (iL,-h)\to ((i-1)L,-h)\to ((i-1)L,0).
\]
All other objects remain fixed.

These two moving objects travel on different horizontal rows, so their trajectories remain disjoint. Since the active rows are $y=h$ and $y=-h$, and since the other anchors lie on $y=0$ and are spaced by $L>2d$, choosing $h$ and the inactive anchors as above ensures that at every moment any two distinct object centers are more than $2d$ apart. By subdividing each segment into a constant number of sufficiently short steps, we obtain a constant-length video in which every object moves by at most $d$ between consecutive frames. Hence, each gadget satisfies the continuity condition and realizes exactly the adjacent transposition $\tau_i$ on the first five objects, while acting as the identity on objects $6,\dots,k$.

Now let
\[
w=\tau_{a_1}\tau_{a_2}\cdots\tau_{a_m}
\]
be an input word over $\{\tau_1,\tau_2,\tau_3,\tau_4\}$. We use the standard right-to-left composition convention, so the group element represented by $w$ is
\[
[w]=\tau_{a_m}\circ\cdots\circ\tau_{a_1},
\]
that is, $\tau_{a_1}$ acts first, then $\tau_{a_2}$, and so on. We map $w$ to a video $V(w)$ by concatenating the gadgets
\[
G_{a_1},G_{a_2},\dots,G_{a_m},
\]
starting from the frame in which the $k$ objects occupy the anchors $p_1,\dots,p_k$.

Each gadget ends with every object exactly at an anchor position, so the next gadget starts from the correct configuration. Therefore, the global permutation induced by the full video is exactly
\[
[w]=\tau_{a_m}\circ\cdots\circ\tau_{a_1}
\]
on the first five objects, and the identity on the remaining objects. Hence, the global permutation $\pi$ is the identity on all $k$ objects if and only if the word $w$ evaluates to the identity in $S_5$.

Finally, the reduction is a constant-block substitution: each input symbol $\tau_i$ is replaced by the constant-length gadget $G_i$. Hence, the output video is computable from the input word by a uniform $\mathbf{AC}^0$ reduction.
\end{proof}

% \begin{theorem}
% For any fixed $k \ge 5$, the problem $\mathrm{TRACK}_k$ is $\mathbf{NC}^1$-complete.
% \end{theorem}
% \begin{proof}
% By Lemma \ref{lem:1}, $\mathrm{TRACK}_k \in \mathbf{NC}^1$. By Lemma \ref{lem:2}, $\mathrm{WORD}_{S_5} \le_{\mathbf{AC}^0} \mathrm{TRACK}_k$ for every fixed $k\ge 5$. Since $\mathrm{WORD}_{S_5}$ is $\mathbf{NC}^1$-complete, it follows that $\mathrm{TRACK}_k$ is $\mathbf{NC}^1$-hard. Therefore, $\mathrm{TRACK}_k$ is $\mathbf{NC}^1$-complete.
% \end{proof}

\completeness*
\begin{proof}
Lemma~\ref{lem:1} gives $\mathrm{TRACK}_k \in \mathbf{NC}^1$, and Lemma~\ref{lem:2} gives $\mathbf{NC}^1$-hardness for every fixed $k \ge 5$ via an $\mathbf{AC}^0$ reduction from $\mathrm{WORD}_{S_5}$, which is $\mathbf{NC}^1$-complete. Therefore, $\mathrm{TRACK}_k$ is $\mathbf{NC}^1$-complete.
\end{proof}

% \completeness*
% \begin{proof}
% ... your proof here ...
% \end{proof}

% video 001
\clearpage
\section{Quantitative Analysis}
\label{sec:quantitative_analysis}
\begin{figure}[H]
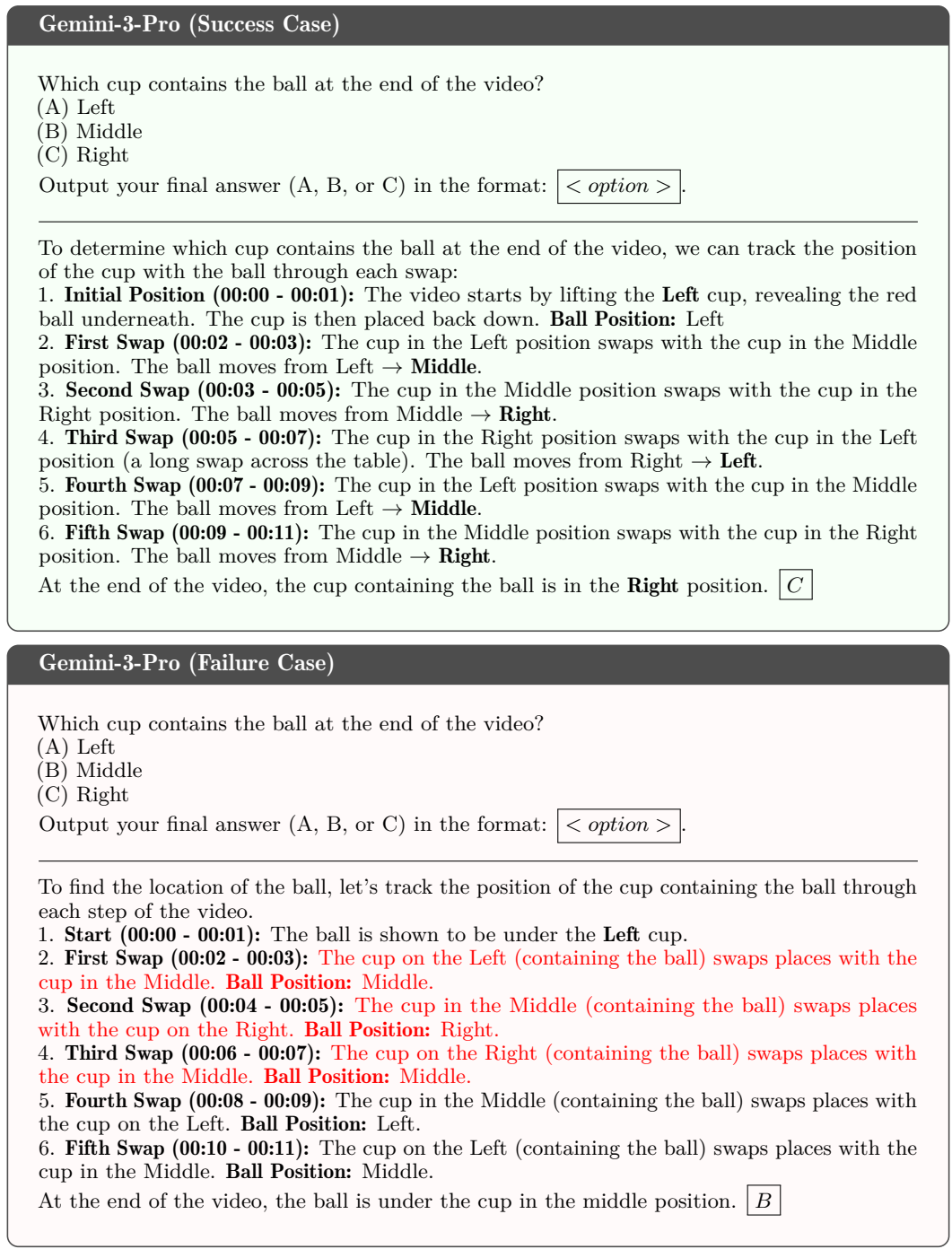

\centering
\begin{tcolorbox}[
    colback=green!3, 
    colframe=black!70, 
    arc=1.5mm, 
    boxrule=0.6pt, 
    left=10pt, right=10pt, top=10pt, bottom=10pt,
    fonttitle=\bfseries,
    title=Gemini-3-Pro (Success Case)
]
    \small
    Which cup contains the ball at the end of the video? \\
    (A) Left \\
    (B) Middle \\
    (C) Right \\
    Output your final answer (A, B, or C) in the format: $\boxed{<option>}$.
    \vspace{0.8em}
    \hrule
    \vspace{0.8em}

To determine which cup contains the ball at the end of the video, we can track the position of the cup with the ball through each swap:

1. \textbf{Initial Position (00:00 - 00:01):} The video starts by lifting the \textbf{Left} cup, revealing the red ball underneath. The cup is then placed back down. \textbf{Ball Position:} Left

2. \textbf{First Swap (00:02 - 00:03):} The cup in the Left position swaps with the cup in the Middle position. The ball moves from Left $\rightarrow$ \textbf{Middle}.

3. \textbf{Second Swap (00:03 - 00:05):} The cup in the Middle position swaps with the cup in the Right position. The ball moves from Middle $\rightarrow$ \textbf{Right}.

4. \textbf{Third Swap (00:05 - 00:07):} The cup in the Right position swaps with the cup in the Left position (a long swap across the table). The ball moves from Right $\rightarrow$ \textbf{Left}.

5. \textbf{Fourth Swap (00:07 - 00:09):} The cup in the Left position swaps with the cup in the Middle position. The ball moves from Left $\rightarrow$ \textbf{Middle}.

6. \textbf{Fifth Swap (00:09 - 00:11):} The cup in the Middle position swaps with the cup in the Right position. The ball moves from Middle $\rightarrow$ \textbf{Right}.

At the end of the video, the cup containing the ball is in the \textbf{Right} position. $\boxed{C}$
\end{tcolorbox}

\begin{tcolorbox}[
    colback=red!2, % Slight red tint to denote a "Wrong" answer/failure case
    colframe=black!70, 
    arc=1.5mm, 
    boxrule=0.6pt, 
    left=10pt, right=10pt, top=10pt, bottom=10pt,
    fonttitle=\bfseries,
    title= Gemini-3-Pro (Failure Case)
]
    \small
    Which cup contains the ball at the end of the video? \\
    (A) Left \\
    (B) Middle \\
    (C) Right \\
    Output your final answer (A, B, or C) in the format: $\boxed{<option>}$.
    \vspace{0.8em}
    \hrule
    \vspace{0.8em}
To find the location of the ball, let's track the position of the cup containing the ball through each step of the video.

% video 007
1. \textbf{Start (00:00 - 00:01):} The ball is shown to be under the \textbf{Left} cup.

2. \textbf{First Swap (00:02 - 00:03):} \textcolor{red}{The cup on the Left (containing the ball) swaps places with the cup in the Middle. \textbf{Ball Position:} Middle.}

3. \textbf{Second Swap (00:04 - 00:05):} \textcolor{red}{The cup in the Middle (containing the ball) swaps places with the cup on the Right. \textbf{Ball Position:} Right.}

4. \textbf{Third Swap (00:06 - 00:07):} \textcolor{red}{The cup on the Right (containing the ball) swaps places with the cup in the Middle.  \textbf{Ball Position:} Middle.}

5. \textbf{Fourth Swap (00:08 - 00:09):} The cup in the Middle (containing the ball) swaps places with the cup on the Left. \textbf{Ball Position:} Left.

6. \textbf{Fifth Swap (00:10 - 00:11):} The cup on the Left (containing the ball) swaps places with the cup in the Middle.  \textbf{Ball Position:} Middle.

At the end of the video, the ball is under the cup in the middle position. $\boxed{B}$
\end{tcolorbox}
\caption{Examples of Gemini-3-Pro responses on the VET-Bench cups-game task. (Top) A successful case where the model correctly tracks all the intermediate states. (Bottom) A failure case where perception errors (highlighted in \textcolor{red}{red}) lead to an incorrect final prediction.}
\label{fig:gemini-3-pro-cup}
\end{figure}

% card_021, card_40
\begin{figure}[H]
\centering
\begin{tcolorbox}[
    colback=green!3, 
    colframe=black!70, 
    arc=1.5mm, 
    boxrule=0.6pt, 
    left=10pt, right=10pt, top=10pt, bottom=10pt,
    fonttitle=\bfseries,
    title=Gemini-3-Pro (Success Case)
]
    \footnotesize
    Where is the Queen of Hearts at the end of the video? \\
    (A) Left \\
    (B) Middle \\
    (C) Right \\
    Output your final answer (A, B, or C) in the format: $\boxed{<option>}$.
    \vspace{0.8em}
    \hrule
    \vspace{0.8em}
To determine the final position of the Queen of Hearts, let's track its movement step-by-step through the video:

1. \textbf{Start (00:00):} The Queen of Hearts is revealed in the \textbf{Middle}. It then flips face down.\\
2. \textbf{First Move (00:01 - 00:03):} The card in the Middle (Queen) and the card on the Left swap places. The Middle card moves over to the Left position, and the Left card moves under to the Middle position. Current Position of Queen: \textbf{Left}.\\
3. \textbf{Second Move (00:03 - 00:05):} The card now in the Middle (which was the empty card) and the card on the Left (Queen) swap places again. The Middle card moves over to the Left, and the Left card moves under to the Middle. Current Position of Queen: \textbf{Middle}.\\
4. \textbf{Third Move (00:05 - 00:07):} The card on the Left (empty) and the card in the Middle (Queen) swap. The Left card moves over to the Middle, and the Middle card moves under to the Left. Current Position of Queen: \textbf{Left}.\\
5. \textbf{Fourth Move (00:07 - 00:09):} The card on the Left (Queen) and the card in the Middle (empty) swap. The Left card moves over to the Middle, and the Middle card moves under to the Left. Current Position of Queen: \textbf{Middle}.\\
6. \textbf{Fifth Move (00:09 - 00:11):} The card in the Middle (Queen) and the card on the Right (empty) swap. The Right card moves over to the Middle, and the Middle card moves under to the Right. Current Position of Queen: \textbf{Right}.

At the end of the video, the Queen of Hearts is in the Right position. $\boxed{C}$
\end{tcolorbox}

\begin{tcolorbox}[
    colback=red!2, % Slight red tint to denote a "Wrong" answer/failure case
    colframe=black!70, 
    arc=1.5mm, 
    boxrule=0.6pt, 
    left=10pt, right=10pt, top=10pt, bottom=10pt,
    fonttitle=\bfseries,
    title= Gemini-3-Pro (Failure Case)
]
    \small
    Where is the Queen of Hearts at the end of the video? \\
    (A) Left \\
    (B) Middle \\
    (C) Right \\
    Output your final answer (A, B, or C) in the format: $\boxed{<option>}$.
    \vspace{0.8em}
    \hrule
    \vspace{0.8em}
Based on the visual evidence provided in the frames, here is the step-by-step tracking of the Queen of Hearts:

1. \textbf{Start (00:00 - 00:01):} The Queen of Hearts is the face-up card on the \textbf{Right}. It flips face-down at 00:01. \textbf{Current Position:} Right

2. \textbf{Swap 1 (00:01 - 00:03):} The first movement involves the Left and Middle cards. The Left card moves high/right, and the Middle card moves low/left. The Right card (Queen) remains stationary. \textbf{Current Position:} Right

3.  \textbf{Swap 2 (00:03 - 00:05):} The second movement involves the Middle and Right cards. The Right card (Queen) moves high/left to the middle position. The Middle card moves low/right to the right position. \textbf{Current Position:} Middle

4.  \textbf{Swap 3 (00:05 - 00:07):} The third movement again involves the Middle and Right cards. The Middle card (Queen) moves low/right back to the right position. The Right card moves high/left to the middle position. \textbf{Current Position:} Right

5.  \textbf{Swap 4 (00:07 - 00:09):} \textcolor{red}{The fourth movement involves the Left and Middle cards. The Right card (Queen) remains stationary. \textbf{Current Position:} Right}

6.  \textbf{Swap 5 (00:09 - 00:11):} The final movement involves the Left and Middle cards again. The Right card (Queen) remains stationary. \textbf{Final Position:} Right

Therefore, at the end of the video, the Queen of Hearts is in the Right position. $\boxed{C}$
\end{tcolorbox}
\caption{Examples of Gemini-3-Pro responses on the VET-Bench cards-game task. (Top) A successful case where the model correctly tracks all the intermediate states. (Bottom) A failure case where perception errors (highlighted in \textcolor{red}{red}) lead to an incorrect final prediction.}
\label{fig:gemini-3-pro-card}
\end{figure}

% coarse description
% qwen cup 28
\begin{figure}[H]
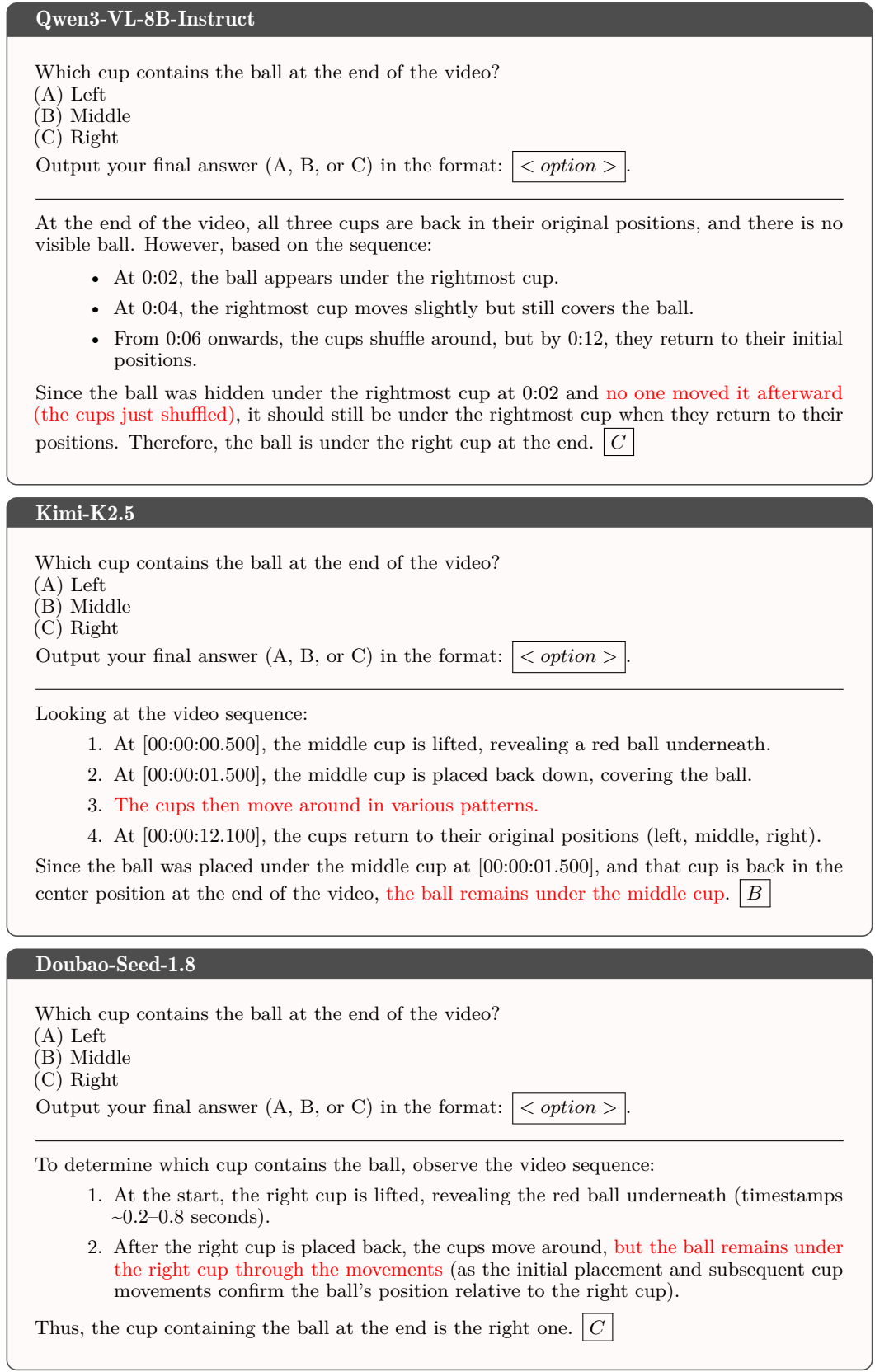

\centering
\begin{tcolorbox}[
    colback=red!2, 
    colframe=black!70, 
    arc=1.5mm, 
    boxrule=0.6pt, 
    left=10pt, right=10pt, top=10pt, bottom=10pt,
    fonttitle=\bfseries,
    title= Qwen3-VL-8B-Instruct
]
    \footnotesize
    Which cup contains the ball at the end of the video? \\
    (A) Left \\
    (B) Middle \\
    (C) Right \\
    Output your final answer (A, B, or C) in the format: $\boxed{<option>}$.
    \vspace{0.8em}
    \hrule
    \vspace{0.8em}
At the end of the video, all three cups are back in their original positions, and there is no visible ball. However, based on the sequence:

\begin{itemize}
    \item At 0:02, the ball appears under the rightmost cup.
    \item At 0:04, the rightmost cup moves slightly but still covers the ball.
    \item From 0:06 onwards, the cups shuffle around, but by 0:12, they return to their initial positions.
\end{itemize}

Since the ball was hidden under the rightmost cup at 0:02 and \textcolor{red}{no one moved it afterward (the cups just shuffled)}, it should still be under the rightmost cup when they return to their positions. Therefore, the ball is under the right cup at the end. $\boxed{C}$
\end{tcolorbox}

\begin{tcolorbox}[
    colback=red!2, % Slight red tint to denote a "Wrong" answer/failure case
    colframe=black!70, 
    arc=1.5mm, 
    boxrule=0.6pt, 
    left=10pt, right=10pt, top=10pt, bottom=10pt,
    fonttitle=\bfseries,
    title= Kimi-K2.5
]
    \small
    Which cup contains the ball at the end of the video? \\
    (A) Left \\
    (B) Middle \\
    (C) Right \\
    Output your final answer (A, B, or C) in the format: $\boxed{<option>}$.
    \vspace{0.8em}
    \hrule
    \vspace{0.8em}
Looking at the video sequence:
\begin{enumerate}
    \item At [00:00:00.500], the middle cup is lifted, revealing a red ball underneath.
    \item At [00:00:01.500], the middle cup is placed back down, covering the ball.
    \item \textcolor{red}{The cups then move around in various patterns.}
    \item At [00:00:12.100], the cups return to their original positions (left, middle, right).
\end{enumerate}
Since the ball was placed under the middle cup at [00:00:01.500], and that cup is back in the center position at the end of the video, \textcolor{red}{the ball remains under the middle cup}. $\boxed{B}$
\end{tcolorbox}

\begin{tcolorbox}[
    colback=red!2, % Slight red tint to denote a "Wrong" answer/failure case
    colframe=black!70, 
    arc=1.5mm, 
    boxrule=0.6pt, 
    left=10pt, right=10pt, top=10pt, bottom=10pt,
    fonttitle=\bfseries,
    title= Doubao-Seed-1.8
]
    \small
    Which cup contains the ball at the end of the video? \\
    (A) Left \\
    (B) Middle \\
    (C) Right \\
    Output your final answer (A, B, or C) in the format: $\boxed{<option>}$.
    \vspace{0.8em}
    \hrule
    \vspace{0.8em}
To determine which cup contains the ball, observe the video sequence:
\begin{enumerate}
    \item At the start, the right cup is lifted, revealing the red ball underneath (timestamps \textasciitilde 0.2–0.8 seconds).
    \item After the right cup is placed back, the cups move around, \textcolor{red}{but the ball remains under the right cup through the movements} (as the initial placement and subsequent cup movements confirm the ball’s position relative to the right cup).
\end{enumerate}
Thus, the cup containing the ball at the end is the right one. $\boxed{C}$
\end{tcolorbox}

% \begin{tcolorbox}[
%     colback=red!2, % Slight red tint to denote a "Wrong" answer/failure case
%     colframe=black!70, 
%     arc=1.5mm, 
%     boxrule=0.6pt, 
%     left=10pt, right=10pt, top=10pt, bottom=10pt,
%     fonttitle=\bfseries,
%     title= Molmo2
% ]
%     \small
%     Which cup contains the ball at the end of the video? \\
%     (A) Left \\
%     (B) Middle \\
%     (C) Right \\
%     Output your final answer (A, B, or C) in the format: $\boxed{<option>}$.
%     \vspace{0.8em}
%     \hrule
%     \vspace{0.8em}
%     \textcolor{red}{(C) Right}
% \end{tcolorbox}

\caption{Examples of model responses on VET-Bench. Errors are highlighted in \textcolor{red}{red}.}
\label{fig:model_performances}
\end{figure}

% Hallucination
\section{Hallucination}
\begin{figure}[H]
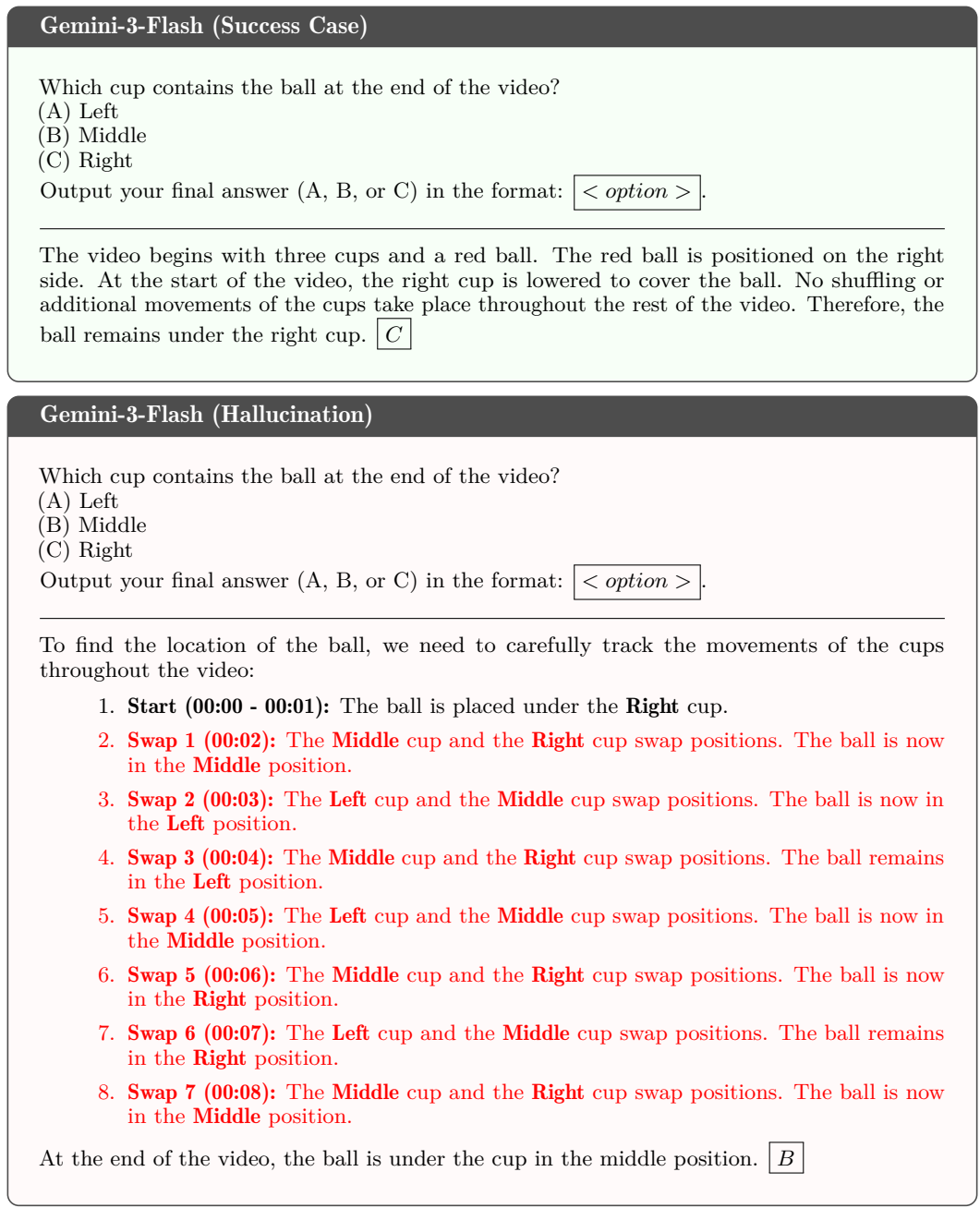

\centering
\begin{tcolorbox}[
    colback=green!3, 
    colframe=black!70, 
    arc=1.5mm, 
    boxrule=0.6pt, 
    left=10pt, right=10pt, top=10pt, bottom=10pt,
    fonttitle=\bfseries,
    title=Gemini-3-Flash (Success Case)
]
    \footnotesize
    Which cup contains the ball at the end of the video? \\
    (A) Left \\
    (B) Middle \\
    (C) Right \\
    Output your final answer (A, B, or C) in the format: $\boxed{<option>}$.
    \vspace{0.8em}
    \hrule
    \vspace{0.8em}
The video begins with three cups and a red ball. The red ball is positioned on the right side. At the start of the video, the right cup is lowered to cover the ball. No shuffling or additional movements of the cups take place throughout the rest of the video. Therefore, the ball remains under the right cup. $\boxed{C}$
\end{tcolorbox}

\begin{tcolorbox}[
    colback=red!2, % Slight red tint to denote a "Wrong" answer/failure case
    colframe=black!70, 
    arc=1.5mm, 
    boxrule=0.6pt, 
    left=10pt, right=10pt, top=10pt, bottom=10pt,
    fonttitle=\bfseries,
    title= Gemini-3-Flash (Hallucination)
]
    \small
    Which cup contains the ball at the end of the video? \\
    (A) Left \\
    (B) Middle \\
    (C) Right \\
    Output your final answer (A, B, or C) in the format: $\boxed{<option>}$.
    \vspace{0.8em}
    \hrule
    \vspace{0.8em}
To find the location of the ball, we need to carefully track the movements of the cups throughout the video:

\begin{enumerate}
    \item \textbf{Start (00:00 - 00:01):} The ball is placed under the \textbf{Right} cup.
    \textcolor{red}{
    \item \textbf{Swap 1 (00:02):} The \textbf{Middle} cup and the \textbf{Right} cup swap positions. The ball is now in the \textbf{Middle} position.
    \item \textbf{Swap 2 (00:03):} The \textbf{Left} cup and the \textbf{Middle} cup swap positions. The ball is now in the \textbf{Left} position.
    \item \textbf{Swap 3 (00:04):} The \textbf{Middle} cup and the \textbf{Right} cup swap positions. The ball remains in the \textbf{Left} position.
    \item \textbf{Swap 4 (00:05):} The \textbf{Left} cup and the \textbf{Middle} cup swap positions. The ball is now in the \textbf{Middle} position.
    \item \textbf{Swap 5 (00:06):} The \textbf{Middle} cup and the \textbf{Right} cup swap positions. The ball is now in the \textbf{Right} position.
    \item \textbf{Swap 6 (00:07):} The \textbf{Left} cup and the \textbf{Middle} cup swap positions. The ball remains in the \textbf{Right} position.
    \item \textbf{Swap 7 (00:08):} The \textbf{Middle} cup and the \textbf{Right} cup swap positions. The ball is now in the \textbf{Middle} position.
    }
\end{enumerate}

At the end of the video, the ball is under the cup in the middle position. $\boxed{B}$
\end{tcolorbox}
\caption{Examples of hallucination by Gemini-3-Flash in zero-swap cases. Gemini-3-Pro and Gemini-3-Flash often correctly identify the initial state but then hallucinate a long sequence of non-existent swaps (as shown in \textcolor{red}{red}), causing the final prediction to be incorrect.}
\label{fig:gemini-3-flash-hallucination}
\end{figure}

% \begin{figure}[H]
% \centering
% \begin{tcolorbox}[
%     colback=gray!5, 
%     colframe=gray!50, 
%     arc=1mm, 
%     boxrule=0.5pt, 
%     left=8pt, right=8pt, top=8pt, bottom=8pt,
%     fonttitle=\bfseries,
%     title=Qualitative Example: Temporal Object Tracking
% ]
%     \small
%     \textbf{Question:} Which cup is the Red ball under at the end of the video? \\
%     A. Left \quad B. Middle \quad C. Right \\\\
%     Output your final answer (A, B, C, etc.) in LaTeX format: $\boxed{<option>}$

%     \vspace{0.5em}
%     \hrule
%     \vspace{0.8em}

%     \textbf{Model (Doubao-Seed-2.0-Mini):} \\
%     $\boxed{\text{B}}$
% \end{tcolorbox}
% \caption{Example of direct answer by Doubao-Seed-2.0-Mini.}
% \label{fig:model_reasoning_trace}
% \end{figure}

% \section{Training}
% Training is performed using QLoRA.
% We use a combination of prompting and aligning method.
% In order to best preserve the model’s original tracking ability, the trajectories are designed to be \emph{in-distribution} with object trajectories in VET-Bench. 

\section{Molmo2-SGCoT}

\begin{figure}[H]
\centering
\begin{tcolorbox}[
    colback=red!2, % Slight red tint to denote a "Wrong" answer/failure case
    colframe=black!70, 
    arc=1.5mm, 
    boxrule=0.6pt, 
    left=10pt, right=10pt, top=10pt, bottom=10pt,
    fonttitle=\bfseries,
    title= Molmo2
]
    \small
    Which cup contains the ball at the end of the video? \\
    (A) Left \\
    (B) Middle \\
    (C) Right \\
    Output your final answer (A, B, or C) in the format: $\boxed{<option>}$.
    \vspace{0.8em}
    \hrule
    \vspace{0.8em}
    \textcolor{red}{(C) Right}
\end{tcolorbox}

\begin{tcolorbox}[
    colback=red!2, % Slight red tint to denote a "Wrong" answer/failure case
    colframe=black!70, 
    arc=1.5mm, 
    boxrule=0.6pt, 
    left=10pt, right=10pt, top=10pt, bottom=10pt,
    fonttitle=\bfseries,
    % title= Molmo2
]
    \small
    Where is the Queen of Hearts at the end of the video? \\
    (A) Left \\
    (B) Middle \\
    (C) Right \\
    Output your final answer (A, B, or C) in the format: $\boxed{<option>}$.
    \vspace{0.8em}
    \hrule
    \vspace{0.8em}
    \textcolor{red}{(A) Left}
\end{tcolorbox}
\caption{Examples of Molmo2's prompts and responses on VET-Bench. Molmo2 responds in a direct-answer mode without explicitly performing spatiotemporal tracking, resulting in near-random performance.}
\label{fig:molmo2_direct_answer}
\end{figure}

\begin{figure}[H]
\centering
\begin{tcolorbox}[
    colback=green!3, 
    colframe=black!70, 
    arc=1mm, 
    boxrule=0.5pt, 
    left=8pt, right=8pt, top=8pt, bottom=8pt,
    fonttitle=\bfseries,
    title=Molmo2-SGCoT
]
    \small
    Track the cup that contains the ball and answer which cup contains the ball at the end of the video?
    \vspace{0.5em}
    \hrule
    \vspace{0.8em}
\begin{Verbatim}[fontsize=\small, breaklines=true, breakanywhere=true, breaksymbol={}]
<tracks coords="0.0 1 216 517;0.5 1 204 307;1.0 1 204 307;1.5 1 208 410;2.0 1 219 554;2.5 1 265 672;3.0 1 369 702;3.5 1 474 624;4.0 1 502 528;4.5 1 504 528;5.0 1 505 528;5.5 1 505 528;6.0 1 505 528;6.5 1 505 528;7.0 1 505 528;7.5 1 505 528;8.0 1 502 502;8.5 1 447 402;9.0 1 349 385;9.5 1 258 435;10.0 1 217 546;10.5 1 262 672;11.0 1 369 706;11.5 1 475 624;12.0 1 505 528">the cup that contains the ball</tracks> Answer: middle.
    \end{Verbatim}
\end{tcolorbox}

\begin{tcolorbox}[
    colback=green!3, 
    colframe=black!70, 
    arc=1mm, 
    boxrule=0.5pt, 
    left=8pt, right=8pt, top=8pt, bottom=8pt,
    fonttitle=\bfseries,
    % title=Molmo2-SGCoT
]
    \small
    Track the Queen of Hearts and answer where is the Queen of Hearts at the end of the video?
    \vspace{0.5em}
    \hrule
    \vspace{0.8em}
\begin{Verbatim}[fontsize=\small, breaklines=true, breakanywhere=true, breaksymbol={}]
<tracks coords="0.0 1 745 512; 0.5 1 745 512; 1.0 1 745 500; 1.5 1 742 425; 2.0 1 681 296; 2.5 1 589 283; 3.0 1 514 383; 3.5 1 506 496; 4.0 1 492 498; 4.5 1 502 504; 5.0 1 503 498; 5.5 1 503 498; 6.0 1 508 517; 6.5 1 492 504; 7.0 1 503 498; 7.5 1 503 492; 8.0 1 462 329; 8.5 1 364 275; 9.0 1 281 350; 9.5 1 247 500; 10.0 1 247 500; 10.5 1 247 500; 11.0 1 247 500; 11.5 1 247 500; 12.0 1 247 500">the Queen of Hearts</tracks> Answer: left.
    \end{Verbatim}
\end{tcolorbox}
\caption{Examples of Molmo2-SGCoT's prompts and responses on VET-Bench. Top: cups game; Bottom: cards game.}
\label{fig:example_of_Molmo2-SGCoT}
\end{figure}

\begin{figure}[H]
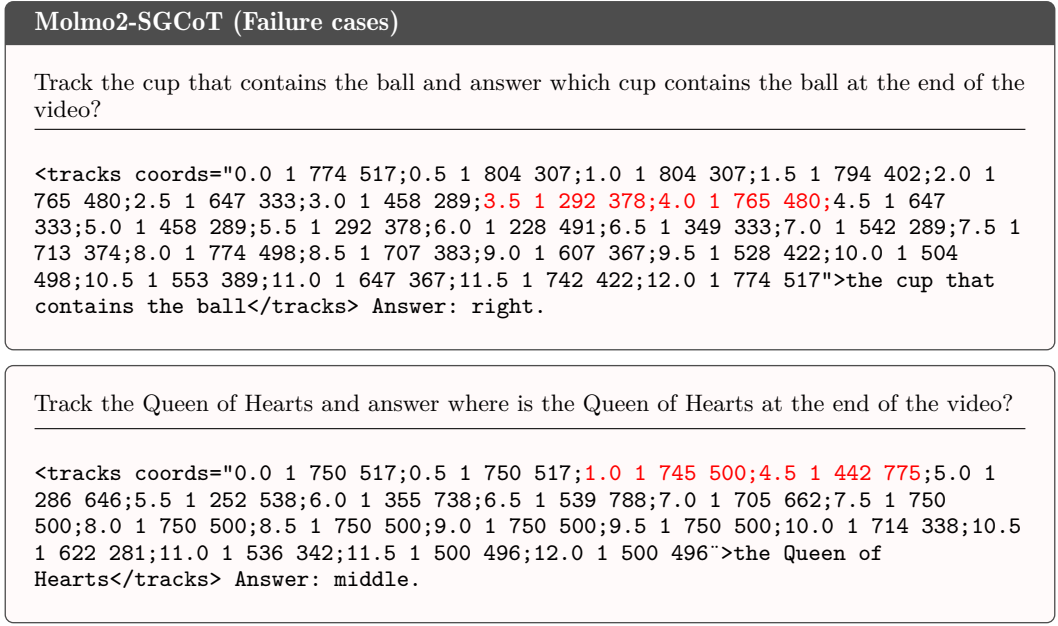

\centering
\begin{tcolorbox}[
    colback=red!2, 
    colframe=black!70, 
    arc=1mm, 
    boxrule=0.5pt, 
    left=8pt, right=8pt, top=8pt, bottom=8pt,
    fonttitle=\bfseries,
    title=Molmo2-SGCoT (Failure cases)
]
    \small
    Track the cup that contains the ball and answer which cup contains the ball at the end of the video?
    \vspace{0.5em}
    \hrule
    \vspace{0.8em}
\begin{Verbatim}[
    fontsize=\small, 
    breaklines=true, 
    breakanywhere=true, 
    breaksymbol={},
    % This is the key: it defines \ { } as the command escape and delimiters
    commandchars=\\\{\} 
]
<tracks coords="0.0 1 774 517;0.5 1 804 307;1.0 1 804 307;1.5 1 794 402;2.0 1 765 480;2.5 1 647 333;3.0 1 458 289;\textcolor{red}{3.5 1 292 378;4.0 1 765 480;}4.5 1 647 333;5.0 1 458 289;5.5 1 292 378;6.0 1 228 491;6.5 1 349 333;7.0 1 542 289;7.5 1 713 374;8.0 1 774 498;8.5 1 707 383;9.0 1 607 367;9.5 1 528 422;10.0 1 504 498;10.5 1 553 389;11.0 1 647 367;11.5 1 742 422;12.0 1 774 517">the cup that contains the ball</tracks> Answer: right.
\end{Verbatim}
\end{tcolorbox}

\begin{tcolorbox}[
    colback=red!2, 
    colframe=black!70, 
    arc=1mm, 
    boxrule=0.5pt, 
    left=8pt, right=8pt, top=8pt, bottom=8pt,
    fonttitle=\bfseries,
    % title=Example of Molmo2-SGCoT's wrong prompt and response on Cup-game
]
    \small
    Track the Queen of Hearts and answer where is the Queen of Hearts at the end of the video?
    \vspace{0.5em}
    \hrule
    \vspace{0.8em}
\begin{Verbatim}[
    fontsize=\small, 
    breaklines=true, 
    breakanywhere=true, 
    breaksymbol={},
    % This is the key: it defines \ { } as the command escape and delimiters
    commandchars=\\\{\} 
]
<tracks coords="0.0 1 750 517;0.5 1 750 517;\textcolor{red}{1.0 1 745 500;4.5 1 442 775};5.0 1 286 646;5.5 1 252 538;6.0 1 355 738;6.5 1 539 788;7.0 1 705 662;7.5 1 750 500;8.0 1 750 500;8.5 1 750 500;9.0 1 750 500;9.5 1 750 500;10.0 1 714 338;10.5 1 622 281;11.0 1 536 342;11.5 1 500 496;12.0 1 500 496\">the Queen of Hearts</tracks> Answer: middle.
\end{Verbatim}
\end{tcolorbox}

\caption{Examples of Molmo2-SGCoT failure cases. Errors typically arise in SGCoT when the model fails to differentiate between visually identical objects. This manifests as abrupt ``jumps'' in the spatial (top) or temporal (bottom) domains (highlighted in \textcolor{red}{red}), leading to incorrect terminal locations and subsequent failures in the final answer.}
\label{fig:sgcot_wrong}
\end{figure}

\section{Training}
\label{app:training}
\begin{table}[H]
\centering
\small
\begin{tabular}{lc}
\toprule
% \textbf{Hyperparameter} & \textbf{Value} \\
% \midrule
Base Model & Molmo2-8B \\
Training Method & QLoRA \\
Gradient Accumulation Steps & 4 \\
Batch Size & 64 \\
Learning Rate & 1e-4 \\
Training Samples & 300 \\
Number of Epochs & 1 \\
\midrule
\multicolumn{2}{c}{\textbf{LoRA Configuration}} \\
\midrule
LoRA Rank ($r$) & 16 \\
LoRA Alpha ($\alpha$) & 16 \\
Target Modules & \texttt{att\_proj, attn\_out, ff\_proj, ff\_out} \\
\bottomrule
\end{tabular}
\caption{Molmo2-SGCoT Training Details. We apply QLoRA to fine-tune the language model while keeping the vision encoder weights frozen.}
\label{tab:training_hyperparameters}
\end{table}

\end{document}